\definecolor{bondiblue}{rgb}{0.0, 0.58, 0.71}
\tikzstyle{state} = [draw, fill=black!15, text centered, minimum height=2em, rounded corners]
\tikzstyle{hstate}=[draw, text centered, minimum height=2em, rounded corners] 
\tikzstyle{txt} = [text centered, minimum height=2em]
\tikzstyle{txt_color} = [text centered, minimum height=2em, text=teal]
\tikzstyle{dgraph}=[->, line width=1.5pt]
\tikzstyle{ocont}=[circle,draw=blue!50,thick,fill=blue!10,minimum size=6mm,>=stealth] 
\tikzstyle{hcont}=[circle,draw=black!50,thick,fill=black!10,minimum size=6mm,>=stealth] 
\tikzstyle{contdec}=[circle,draw=red!50,thick,fill=red!10]  
\tikzstyle{cont}=[circle,draw=blue!50,thick,minimum size=6mm,line width=2pt,>=stealth]  
\tikzstyle{blackcont}=[circle,draw=black!50,thick,minimum size=6mm,line width=2pt,>=stealth]  
\tikzstyle{oval}=[ellipse,draw=blue!50,thick,minimum size=6mm,line width=2pt,>=stealth]  
\tikzstyle{disc}=[rectangle,draw=blue!50,thick,line width=1pt,minimum size=6mm]  
\tikzstyle{fillred}=[fill=red!20,thick]  
\tikzstyle{fillgreen}=[fill=green!20,thick]  
\tikzstyle{purered}=[fill=red]  
\tikzstyle{state}=[rectangle,fill=red!30]  
\tikzstyle{hstate}=[rectangle,fill=blue!20]  
\tikzstyle{sobs}=[fill=green!15,thick]  
\tikzstyle{fact}=[fill,minimum size=1.5mm,line width=2pt,>=stealth]
\tikzstyle{varfact}=[draw,minimum size=1.5mm,line width=2pt,>=stealth]
\tikzstyle{sep}=[rectangle,draw=magenta!50,thick,minimum size=6mm]  
\tikzstyle{det}=[fill=red!15,rectangle,draw=red!50,thick,minimum size=6mm]  
\tikzstyle{dethid}=[diamond,draw=red!50,thick,minimum size=6mm]  
\tikzstyle{lineball}=[fill,-*,draw=red!50,line width=1.5pt]
\tikzstyle{redball}=[mark=*,mark options={fill=red!50,draw=red},mark size=0.5pt]
\tikzstyle{greenball}=[mark=*,mark options={fill=green!50,draw=green},mark size=0.5pt]
\tikzstyle{dec}=[rectangle,draw=red!50,thick,minimum size=6mm]  
\tikzstyle{utility}=[diamond,draw=red!50,thick,minimum size=6mm]  
\tikzstyle{decutility}=[diamond,draw=red!50,thick,minimum size=6mm]  
\newcommand{\ie}{\emph{i.e.}}
\newcommand{\eg}{\emph{e.g.}}
\newcommand{\io}{\emph{i.o.}}
\newcommand{\M} {\mathcal{M}}
\newcommand{\s} {\mathcal{S}}
\newcommand{\Reals}{\mathbb R}
\newcommand{\bbE}{\mathbb E}
\newcommand{\bbP}{\mathbb P}
\newtheorem{theorem}{Theorem}
\newtheorem{remark}[theorem]{Remark}
\newcommand{\figref}[1]{Fig.~\ref{#1}}
\renewcommand{\eqref}[1]{Eq.~(\ref{#1})}
\begin{document}

\title{Degenerate Feedback Loops in Recommender Systems}

\author{Ray Jiang, Silvia Chiappa, Tor Lattimore, Andr{\'a}s Gy{\"o}rgy, Pushmeet Kohli\\
\{rayjiang,csilvia,lattimore,agyorgy,pushmeet\}@google.com\\
DeepMind London, UK\\}

\maketitle
\begin{abstract}
Machine learning is used extensively in recommender systems deployed in products. The decisions made by these systems can influence user beliefs and preferences which in turn affect the feedback the learning system receives - thus creating a feedback loop. This phenomenon can give rise to the so-called “echo chambers” or “filter bubbles” that have user and societal implications. In this paper, we provide a novel theoretical analysis that examines both the role of user dynamics and the behavior of recommender systems, disentangling the echo chamber from the filter bubble effect. In addition, we offer practical solutions to slow down system degeneracy. Our study contributes toward understanding and developing solutions to commonly cited issues in the complex temporal scenario, an area that is still largely unexplored.
\end{abstract}

\section{Introduction}
Recommender systems are increasingly used to provide users with personalized product and information offerings \citep{benschafer01e,lu15recommender,covington16deep}. These systems employ user's personal characteristics and past behaviors to generate a list of items that are individually tailored to the user’s preferences.
Whilst extremely successful commercially, there are growing concerns that such systems might lead to a self-reinforcing pattern of narrowing exposure and shift in user's interest, problems that are often referred to in the literature as ``echo chamber'' and ``filter bubble''. A significant amount of research has therefore been devoted to deriving ways to favor diversity in the set of items an individual may be exposed to (see \citet{kunaver17diversity} for a review). However, current understanding of the echo chamber and filter bubble effects is limited and experimental analysis reports conflicting results.

In this paper, we define as echo chamber the effect of a user's interest being positively or negatively reinforced by repeated exposure to a certain item or category of items, thereby generalizing the definition in \citet{sunstein09republic}, where the term is used to refer to over- and limited-exposure to similar political opinions reinforcing one's existing beliefs. We focus the definition of filter bubble introduced by \citet{pariser11filter} to describe just the fact that recommender systems select limited content to serve users online. We provide a theoretical treatment that allows us to consider the echo chamber and filter bubble effects separately. We view user's interest as a dynamical system and treat interest extremes as degeneracy points of the system. We consider different models of dynamics and identify sets of sufficient conditions that make them degenerate over time. We then use this analysis to understand the role played by the recommender system. Finally, we showcase the interplay between the user's dynamics and the recommender system actions in a simulation study using synthetic data and several classic bandit algorithms. The results reveal several pitfalls of recommender system design and point towards mitigation strategies. 

\section{Related Work}
Through an analysis on the MovieLens dataset, \citet{nguyen14exploring} found that the diversity of items recommended, and those users engage with, gets narrower over time. The paper asks whether there is a ``natural'' tendency of degeneration in user interest. Our paper takes steps toward answering this question by providing theoretical conditions for user interest degeneracy.

In the social sciences literature, \citet{flaxman16filter} found that online services are associated with increased political polarization between users as well as increased exposure to the less preferred side of political opinions. Their seemingly counter-intuitive findings are not contradictory according to our results: systems with some level of random exploration can be degenerative. \citet{Barbera15tweeting} also presented evidence of echo chamber related to political issues on Twitter. On the other hand, \cite{Borgesius16should,beam18facebook,Nechushtai18what} found counter-evidence on online news consumption. Another work by \citet{bakshy15exposure} measured the effect of user choices separately from that of the recommendation algorithm, and found that individual choices play a larger role than the algorithm in creating echo chamber on Facebook. This supports our viewpoint that user interests degenerate or not depending on their internal dynamics, the recommender system can only slow down or accelerate the process of degeneration.

\section{Model} \label{sec:method}
 We consider a recommender system that interacts with a user over time\footnote{For simplicity, in this paper we restrict ourselves to the case of a single user, and leave the case of multiple users possibly influencing each-other interests to future work.}.
At every time step $t$, the recommender system serves $l$ items (or categories of items, \eg~news articles, videos, or consumer products) 
to a user from a finite or countably infinite item\footnote{Throughout the paper, ``items'' also mean categories of items.} set $\M$.
In general, the goal of the system is to present items to a user that she is interested in: we assume that, at time step $t$, the user's interest in an item $a \in \M$ is described by a function $\mu_t: \M \to \Reals$ such that $\mu_t(a)$ is large (positive) if the user is interested in the item, and small (negative) if she is not\footnote{Whilst we focus on $\mu_t(a) \in \Reals$, we show in Remark~\ref{thm:rescale}, Appendix~\ref{app:rescale} that our results can be extended to the case where $\mu_t$ belongs to a bounded open interval.}.

Given a recommendation $a_t = (a^1_t,\ldots,a^l_t) \in \M^l$, the user provides some feedback $c_t$ based on her current interests $\mu_t(a^1_t),\ldots,\mu_t(a^l_t)$. This interaction has multiple effects: 
in the traditional literature for recommender systems, the feedback $c_t$ is used to update the internal model $\theta_t$ of the recommender system that has been used to obtain the recommendation $a_t$, and  the new model $\theta_{t+1}$ may depend on $\theta_t$, $a_t$, and $c_t$. In practice $\theta_t$ usually predicts the distribution of user feedback to determine which items $a_t$ should be presented to the user.
In this paper we focus on another effect and consider explicitly that the user's interaction with the recommender system may change her interest in different items for the next interaction, thus the interest $\mu_{t+1}$ may depend on $\mu_t$, $a_t$, and $c_t$. The full model of interaction is depicted in \figref{fig:state_graph}.

\begin{figure}[t]
\begin{center}
\scalebox{0.7}{
\begin{tikzpicture}
\node (theta) [state] {$\theta_{t+1}$};
\path (theta.west)+(-2.5, 0) node (theta0) [state] {$\theta_t$};
\path (theta.east)+(1.5, 0) node (theta1) [txt] {...};
\path (theta.south)+(0, -1) node (at) [state] {$a_{t+1}$};
\path (at.south)+(0, -1.5) node (ct) [state] {$c_{t+1}$};
\path (theta0.west) + (-1.5, 0) node (theta_1) [txt] {...};
\path (theta0.south) + (0, -1) node (at0) [state] {$a_t$};
\path (at0.south)+(0, -1.5) node (ct0) [state] {$c_t$};
\path (ct.south)+(0, -1.0) node (mu) [hstate] {$\mu_{t+1}$};
\path (ct0.south)+(0, -1.0) node (mu0) [hstate] {$\mu_t$};
\path (mu.east)+(1.5, 0) node (mu1) [txt] {...};
\path (mu0.west)+(-1.5, 0) node (mu_1) [txt] {...};

\path (theta_1) + (-1.85, -1.9) node (rec_sys) [txt_color] {Recommender system};
\node[inner sep=0pt] (recsys_pic) at (-6.65, -0.9)
    {\includegraphics[width=.13\textwidth]{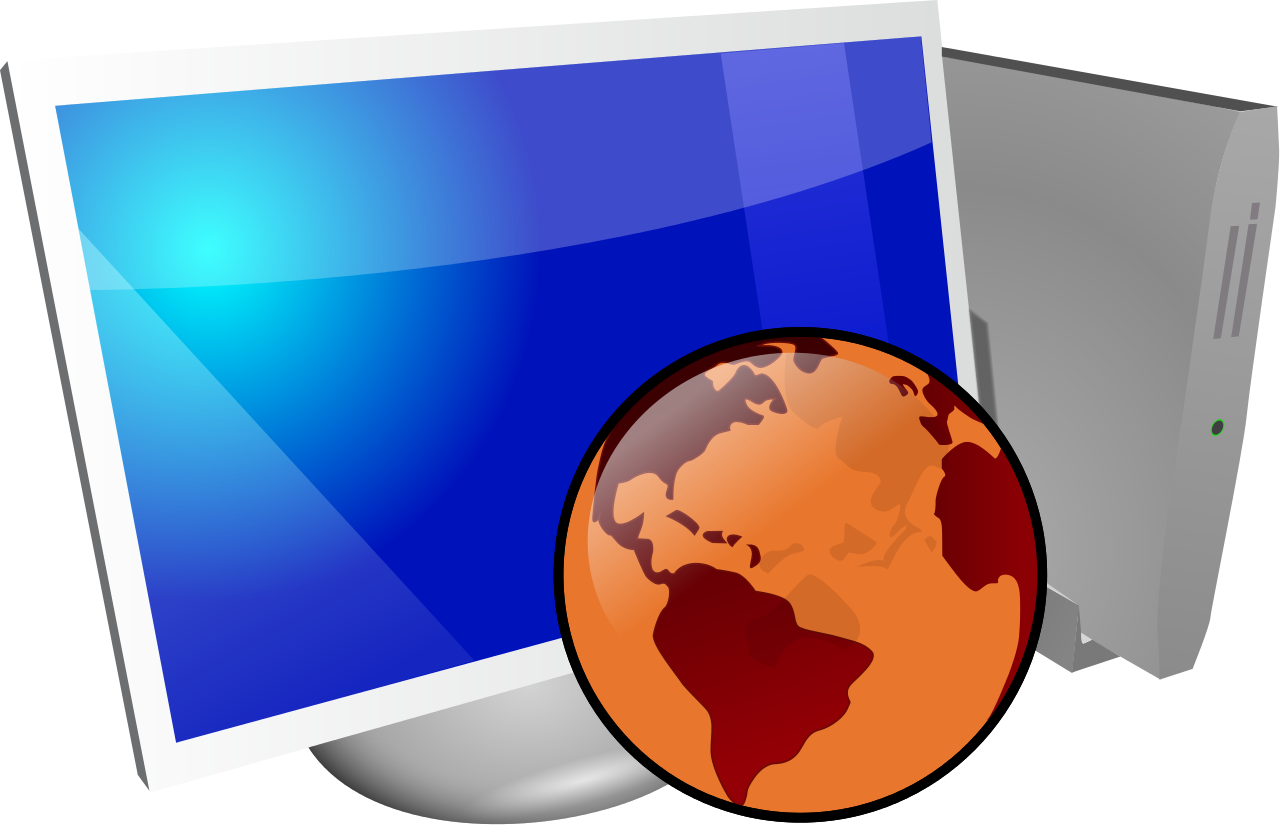}};
\path (theta_1) + (-2, -4.25) node (u_interest) [txt_color] {User};
\node[inner sep=0pt] (user_pic) at (-6.7,-3.15)
    {\includegraphics[width=.1\textwidth]{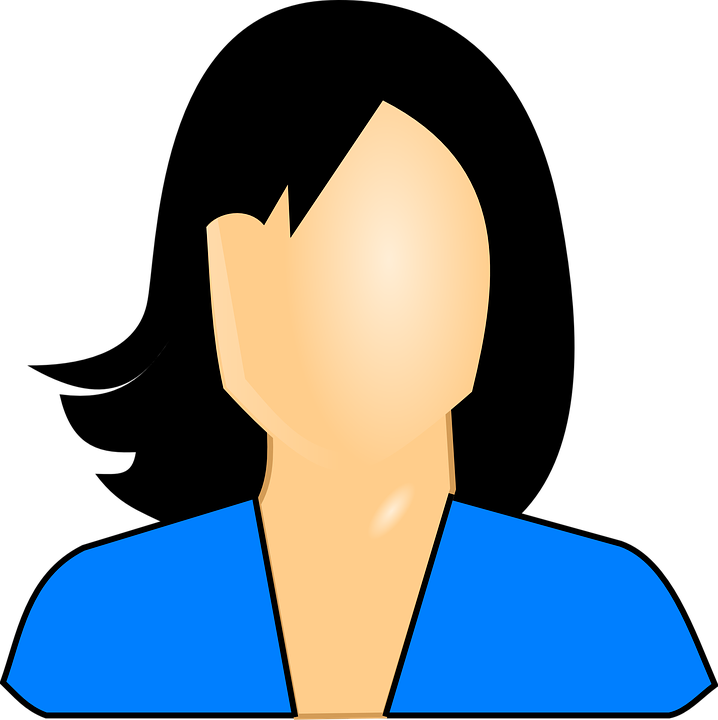}};
\path (theta_1) + (0, -2.15) node (line_start) [txt] {};
\path (theta1) + (0, -2.15) node (line_end) [txt] {};

\path [draw, ->] (theta.south) -- node [above] {} (at.north);
\path [draw, ->] (at.south) -- node [above] {} (ct.north);
\path [draw, ->, dashed] (theta.east) -- node [above] {} (theta1.west);
\path [draw, ->, dashed] (theta0.east) -- node [above] {} (theta.west);
\path [draw, ->] (ct0.east) -- node [above] {} (theta.250);
\path [draw, ->, dashed] (theta_1.east) -- node [above] {} (theta0.west);
\path [draw, ->] (theta0.south) -- node [above] {} (at0.north);
\path [draw, ->] (at0.south) -- node [above] {} (ct0.north);
\path [draw, ->] (mu.north) -- node [above] {} (ct.south);
\path [draw, ->] (mu0.north) -- node [above] {} (ct0.south);
\path [draw, ->] (mu0.east) -- node [above] {} (mu.west);
\path [draw, ->] (mu.east) -- node [above] {} (mu1.west);
\path [draw, ->] (mu_1.east) -- node [above] {} (mu0.west);
\path [draw, ->] (ct0.east) -- node [above] {} (mu.150);
\path [draw, ->] (at0.east) -- node [above] {} (theta.220);
\path [draw, ->] (at0.east) -- node [above] {} (mu.130);

\path [draw, teal, dashed] (line_start.east) -- node [above] {} (line_end.west);
\end{tikzpicture}}

\end{center}
\caption[State graph]{Model of interaction between a recommender system and user over time. Continuous and dashed links indicate existing or possible dependencies, respectively.}
\label{fig:state_graph}
\end{figure}

We are interested in studying the evolution of the user's interest. An example of such an evolution is that the interest is reinforced by user interactions with the recommended items, that is, $\mu_{t+1}(a)>\mu_t(a)$ if the user clicks on an item $a$ at time step $t$, while $\mu_{t+1}(a)<\mu_t(a)$ if $a$ is shown but not clicked (here $c_t \in \{0,1\}^l$ can be defined as the indicator vector of clicks to the corresponding items).

To analyze the echo chamber or filter bubble effect, we are interested in understanding when the user’s interest changes extremely, which, in our model, translates to $\mu_t(a)$ taking values arbitrarily different from the initial interest $\mu_0(a)$: large positive values indicate that the user becomes extremely interested in item $a$, while large negative values indicate that the user dislikes $a$.
Formally, for a finite item set $\M$, we can ask if the $L^2$ norm 
$\|\mu_t-\mu_0\|_2 = \left(\sum_{a\in \M} (\mu_t(a)-\mu_0(a))^2\right)^{1/2}$
 can grow arbitrarily large: the user's interest sequence $\mu_t$ is called \emph{weakly degenerate} if
\begin{equation}
    \label{eq:weak}
\limsup_{t\rightarrow \infty}\|\mu_t - \mu_0\|_2 = \infty \quad \text{ \emph{almost surely\footnotemark}}.
\end{equation}
\footnotetext{\ie~ with probably 1.}

A stronger notion of degeneracy, which also requires that once $\mu_t$ drifted away from $\mu_0$ it remains so, is \emph{strong degeneracy}:  the sequence $\mu_t$ is \emph{strongly degenerate} if 
\begin{equation}
    \label{eq:strong}
    \lim_{t\rightarrow \infty}\|\mu_t - \mu_0\|_2 = \infty \qquad \text{\emph{almost surely}}.
\end{equation}

In the next section we show that weak or strong degeneracy occurs under mild sufficient conditions on the evolutionary dynamics of $\mu_t$.

There are multiple ways to extend the above definitions to the case of an infinite item set $\M$. For simplicity, we only consider here replacing $\|\mu_t - \mu_0\|_2$ with $\sup_{a\in\M}|\mu_t(a)-\mu_0(a)|$ in Eqs. (\ref{eq:weak}) and~(\ref{eq:strong}), which is equivalent to the original definitions when $\M$ is finite\footnote{As such, we could have used $\sup_{a\in\M}|\mu_t(a)-\mu_0(a)|$ in our original definitions, but we prefer $\|\mu_t - \mu_0\|_2$ as it also provides some information about the ``average'' deviation of the user's interest over the different items at any finite time $t$.}.

\section{User Interest Dynamics -- Echo Chamber}\label{sec:user}
As items often represent diverse categories of things, we make the simplifying assumption that they are independent from each other. By setting $l=1$ and $a^1_{t}=a$ for all $t$ (\ie, $\M=\{a\}$), we can remove the influence of the recommender system and consider the dynamics of the user's interest separately. This allows us to analyze the echo chamber effect: what happens to the interest $\mu_t(a)$ if item $a$ is served infinitely often (\io).

Since $a$ is fixed, to simplify the notation, we write $\mu_t$ instead of $\mu_t(a)$ in this section. Given $a_t$, according to \figref{fig:state_graph}, $\mu_{t+1}$ is a---possibly stochastic---function of $\mu_t$ (as $\mu_{t+1}$ depends on $c_t$ and $\mu_t$, and $c_t$ depends on $\mu_t$).
Below we discuss the general case when the drift $\mu_{t+1}-\mu_t$ is a nonlinear stochastic function; deterministic models for the drift are considered in Appendix~\ref{app:deterministic}.
\vspace*{-5mm}
\paragraph{Nonlinear Stochastic Model.}
We assume that $\mu_0 \in \Reals$ is fixed and that $\mu_{t+1} = \mu_t + f(\mu_t, \xi_t)$, where
$(\xi_t)_{t=1}^\infty$ is an infinite sequence of independent uniformly distributed random variables that introduce noise into the system (\ie~$\mu_{t+1}$ is a stochastic function of $\mu_t$). The function $f : \Reals \times [0,1]$ is assumed to be measurable, but otherwise arbitrary. Denoting the uniform distribution over $[0,1]$ by $U([0,1])$, let
\begin{align*}
    \bar f(\mu) = \bbE_{\xi \sim U([0,1])}[f(\mu, \xi)]
\end{align*}
be the expected increment $\mu_{t+1} - \mu_t$ 
when $\mu_t = \mu$. We also define 
\begin{align*}
F(\mu, x) = \bbP_{\xi \sim U([0,1])}(f(\mu, \xi) \leq x)
\end{align*}
to be the cumulative distribution of the increment. 
The asymptotic behavior of $\mu_t$
depends on $f$, but under mild assumptions the system degenerates weakly (Theorem~\ref{thm:weak}) or strongly (Theorem~\ref{thm:strong})\footnote{The proofs of these theorems are given in Appendix~\ref{app:proofs}.}.

\begin{theorem}[weak degeneracy]\label{thm:weak}
Assume that $F$ is continuous at $(\mu, 0)$ for all $\mu \in \Reals$ and there exists a $\mu_{\circ} \in \Reals$ such that 1) $F(\mu, 0) < 1$ for all $\mu \geq \mu_{\circ}$, 2) $F(\mu, 0) > 0$ for all $\mu \leq \mu_{\circ}$.
Then the sequence $\mu_t$ is weakly degenerate, \emph{\ie}~$\limsup_{t\to\infty} |\mu_t| = \infty$ almost surely.
\end{theorem}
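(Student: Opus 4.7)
The plan is to prove the contrapositive: for every $M > |\mu_{\circ}|$, with probability one the sequence $(\mu_t)$ eventually leaves the interval $B_M := [-M,M]$. Once this is established, intersecting the full-measure events over a countable sequence $M_k \to \infty$ yields $\sup_t |\mu_t| = \infty$ almost surely, which is equivalent to $\limsup_{t \to \infty} |\mu_t| = \infty$ almost surely.

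The first step is to extract uniform lower bounds on ``helpful'' increments via compactness and continuity of $F$. For each $\mu \in [\mu_\circ, M]$, the bound $F(\mu, 0) < 1$ together with continuity of $F$ at $(\mu, 0)$ produces a neighborhood of $\mu$ and constants $\delta_\mu, \eta_\mu > 0$ with $\bbP(f(\mu', \xi) > \delta_\mu) \geq \eta_\mu$ for $\mu'$ in the neighborhood. Compactness of $[\mu_\circ, M]$ then gives uniform $\delta_+, \eta_+ > 0$ with $\bbP(f(\mu, \xi) > \delta_+) \geq \eta_+$ on all of $[\mu_\circ, M]$. Symmetrically, $F(\mu, 0) > 0$ and continuity on $[-M, \mu_\circ]$ yield uniform $\delta_-, \eta_- > 0$ with $\bbP(f(\mu, \xi) \leq -\delta_-) \geq \eta_-$ there. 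Set $\delta := \min\{\delta_+, \delta_-\}$, $\eta := \min\{\eta_+, \eta_-\}$, and fix $N \in \mathbb{N}$ with $N\delta > 2M$.

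The key lemma is then: for every $\mu_0 \in B_M$, the probability that $(\mu_t)$ leaves $B_M$ within $N$ steps is at least $\eta^N$. I treat $\mu_0 \in [\mu_\circ, M]$ (the case $\mu_0 \in [-M, \mu_\circ)$ is symmetric, with downward good steps). Define the ``good'' set
\[
T_\mu := \{\xi \in [0,1] : f(\mu, \xi) > \delta\} \text{ if } \mu \in [\mu_\circ, M], \qquad T_\mu := [0,1] \text{ otherwise},
\]
so that $\bbP(\xi \in T_\mu) \geq \eta$ uniformly in $\mu$. Iteratively conditioning on the filtration generated by $(\xi_s)_{s<t}$ and using the tower property, the event $\Omega_0 := \bigcap_{i=0}^{N-1}\{\xi_i \in T_{\mu_i}\}$ satisfies $\bbP(\Omega_0) \geq \eta^N$. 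On $\Omega_0$, an induction on $t$ shows that as long as $\mu_t \in [\mu_\circ, M]$ the step at time $t$ increases $\mu$ strictly by more than $\delta$; hence either $\mu_t$ exits above $M$ at some $t \leq N$ (so $\mu_t \notin B_M$), or $\mu_t$ remains in $[\mu_\circ, M]$ throughout, forcing $\mu_N > \mu_\circ + N\delta > M$ (using $N\delta > 2M$ and $\mu_\circ > -M$), a contradiction.

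Finally, I iterate in blocks of length $N$ using the Markov property of $(\mu_t)$: letting $\tau := \inf\{t \geq 0 : \mu_t \notin B_M\}$, the key lemma gives $\bbP(\tau > (k+1)N \mid \mathcal{F}_{kN}) \leq 1 - \eta^N$ on $\{\tau > kN\}$, so $\bbP(\tau > kN) \leq (1-\eta^N)^k \to 0$, and $\tau < \infty$ almost surely for each $M > |\mu_\circ|$. The main obstacle I expect is the tower-property bound in the key lemma: once $\mu_t$ drifts outside $[\mu_\circ, M]$ (either by exiting above $M$ or, in principle, by crossing $\mu_\circ$) the compactness lower bound on $\bbP(\xi \in T_{\mu_t})$ no longer applies. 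Defining $T_\mu := [0,1]$ outside $[\mu_\circ, M]$ trivializes the bound in those cases while still forcing the desired exit via the inductive contradiction on the non-exit branch.
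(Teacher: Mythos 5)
Your proposal is correct and follows essentially the same route as the paper's proof: a compactness--continuity argument yielding a uniform probability $\eta>0$ of a $\delta$-sized step in the escaping direction, chained over $N\approx 2M/\delta$ steps to give escape probability at least $\eta^N$ from any starting point in $[-M,M]$, followed by a geometric block iteration. Your version is somewhat more explicit about the within-block induction and about converting the per-interval escape statements into $\limsup_t|\mu_t|=\infty$, but the underlying argument is the same.
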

The assumptions guarantee that within any closed bounded interval there is a constant probability that the random walk escapes to the left/right when starting to the left/right of $\mu_{\circ}$ respectively. Under stronger conditions it is possible to guarantee the divergence of the random walk. We state a simple version of the theorem, but note that the result can be generalized in many ways.

\begin{theorem}[strong degeneracy]\label{thm:strong}
Assume that the conditions of Theorem~\ref{thm:weak} hold,
and additionally that there exists $c \in \Reals$ such that $|\mu_{t+1} - \mu_t| \leq c$ almost surely and there exists an $\epsilon > 0$ such that for all sufficiently large $\mu$ it holds that
$\bar f(\mu) > \epsilon$, and for all sufficiently small $\mu$ it holds that $\bar f(\mu) < -\epsilon$. 
Then $\lim_{t\to\infty} \mu_t = \infty$ or $\lim_{t\to\infty} \mu_t = -\infty$ almost surely.
\end{theorem}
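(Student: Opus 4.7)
The plan is to combine Theorem~\ref{thm:weak} with an exponential escape estimate driven by the positive-drift condition. By Theorem~\ref{thm:weak}, $\limsup_{t\to\infty}|\mu_t| = \infty$ almost surely, so at least one of the events $E_+ = \{\limsup_t \mu_t = +\infty\}$ and $E_- = \{\liminf_t \mu_t = -\infty\}$ has full measure on some part of the sample space. It suffices to show that on $E_+$ one has $\mu_t \to +\infty$ a.s.\ (the negative case is symmetric, and the two limits are mutually exclusive since the argument forces $\mu_t$ to eventually stay on one side).

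Fix $M$ large enough that $\bar f(\mu) \geq \epsilon$ for all $\mu \geq M$. The main estimate I would prove is: starting from a level $K > M$, the probability that $\mu_t$ ever returns to $M$ is at most $e^{-\lambda(K-M)}$ for some $\lambda>0$. The natural tool is a Lyapunov / exponential-supermartingale argument. Set $V(\mu) = e^{-\lambda(\mu-M)}$ and write
\begin{align*}
\bbE[V(\mu_{t+1})\mid \mu_t] = V(\mu_t)\,\bbE\!\left[e^{-\lambda(\mu_{t+1}-\mu_t)}\mid \mu_t\right].
\end{align*}
Since $|\mu_{t+1}-\mu_t| \leq c$ and $\bbE[\mu_{t+1}-\mu_t \mid \mu_t] = \bar f(\mu_t) \geq \epsilon$ whenever $\mu_t \geq M$, Hoeffding's lemma gives $\bbE[e^{-\lambda(\mu_{t+1}-\mu_t)}\mid \mu_t] \leq e^{-\lambda \epsilon + \lambda^2 c^2/2}$, which is $\leq 1$ for any $\lambda \in (0, 2\epsilon/c^2]$. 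Hence $V(\mu_{t})$ stopped at $\tau = \inf\{t : \mu_t < M\}$ is a nonnegative supermartingale, and optional stopping yields $\bbP(\tau < \infty \mid \mu_0 = K) \leq e^{-\lambda(K-M)}$.

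To upgrade this into almost-sure divergence, I would introduce the hitting times $\sigma_n = \inf\{t : \mu_t \geq M+n\}$ and the events $A_n = \{\sigma_n < \infty \text{ and } \mu_t < M \text{ for some } t > \sigma_n\}$. By the strong Markov property applied at $\sigma_n$ together with the supermartingale bound, $\bbP(A_n) \leq e^{-\lambda n}$, which is summable; Borel--Cantelli then implies that almost surely only finitely many $A_n$ occur. On $E_+$ every $\sigma_n$ is finite, so there exists a random $T$ with $\mu_t \geq M$ for all $t \geq T$. In this regime the centred walk $\mu_t - \mu_T - \sum_{s=T}^{t-1}\bar f(\mu_s)$ is a bounded-increment martingale, so Azuma--Hoeffding (or the strong law) applied to it combined with $\bar f(\mu_s) \geq \epsilon$ forces $\mu_t \to \infty$.

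The main technical care is in the stopping-time bookkeeping: $\bar f$ need only exceed $\epsilon$ on $(M,\infty)$, so the supermartingale identity only holds while the walk has not yet dropped below $M$. Stopping $V(\mu_t)$ at $\tau$ is exactly what avoids this issue, and the strong Markov property is what lets us apply the one-step escape estimate at each level $M+n$. Everything else is routine: summability of $e^{-\lambda n}$, Borel--Cantelli, and a final law-of-large-numbers step to turn ``eventually above $M$'' into ``diverges to $+\infty$''.
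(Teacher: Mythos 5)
Your proof is correct, but it takes a genuinely different route from the paper's. The paper decomposes the trajectory into excursions out of a compact interval $[-B,B]$: using the martingale $M_n=\sum_{t\le n}(\mu_{t+1}-\mu_t-\bar f(\mu_t))$ with the strong law of large numbers it shows each excursion either diverges or returns, uses Azuma's inequality to show each excursion diverges with probability strictly greater than $1/2$, and then invokes the conditional (L\'evy--extended) Borel--Cantelli lemma to conclude that some excursion eventually succeeds. You instead prove a quantitative escape estimate --- the return probability from level $M+n$ decays like $e^{-\lambda n}$ via the exponential supermartingale $e^{-\lambda(\mu_{t\wedge\tau}-M)}$ and Hoeffding's lemma --- which lets you use the ordinary Borel--Cantelli lemma along the increasing levels $M+n$ and avoid the conditional version entirely; the price is the extra stopping-time bookkeeping and the appeal to the strong Markov property at the $\sigma_n$. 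Both arguments end with the same law-of-large-numbers step converting ``eventually stays above $M$'' into divergence. One small thing to tighten: your random time $T$ (the first $\sigma_n$ after which the walk never drops below $M$) is not a stopping time, so ``the centred walk started at $T$ is a martingale'' is not literally correct as stated. The fix is what the paper does: work with the global martingale $M_n$ from time $0$, note $M_n/n\to 0$ a.s., and observe that on the event $\{\mu_t\ge M\ \forall t\ge T\}$ the drift sum satisfies $\sum_{t=1}^{n}\bar f(\mu_t)\ge \epsilon(n-T)-cT$, which still forces $\mu_{n+1}\to\infty$.
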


Intuitively, weak degeneracy occurs in a stochastic environment if the user's interest has some non-zero probability of drifting up when above some threshold, and of drifting down when below. Strong degeneracy holds if additionally $|\mu_{t+1}-\mu_t|$ is bounded and for $\mu_t$ sufficiently large/small the increment $\mu_{t+1}-\mu_t$ has positive/negative drift that is larger than a constant.

Theorems~\ref{thm:weak} and~\ref{thm:strong} show that the user's interest degenerates under very mild conditions, in particular, in the model we consider in our simulation studies. Thus, in such cases degeneracy can only be avoided if an item (or item category) is showed only finitely many times; otherwise one can only hope to control how fast $\mu_t$ degenerates (\ie~tends to $\infty$).

\section{System Design Role -- Filter Bubble}\label{sec:recsys}
In the previous section we discussed conditions for degeneracy for different user interest dynamics. In this section  we examine the other side of the story, the influence of recommender system actions in creating filter bubbles. 
We typically do not know the dynamics of the user's interest in the real world. However, we consider the relevant scenario to the echo chamber/filter bubble problem where user's interest in some items has degenerative dynamics, and examine how to design a recommender system that slows down the degeneracy process. We consider three dimensions, namely model accuracy, amount of exploration, and growing candidate pool. 

\vspace*{-3.5mm}
\paragraph{Model Accuracy.}

One common goal of recommender systems designers is to increase the prediction accuracy of the internal model $\theta_t$. 
How does model accuracy coupled with greedy optimal $a_t$ affect the speed of degeneration? We examine this question for the extreme case of exact predictions, \ie~$\theta_t = \mu_t$, we call such a prediction model the \emph{oracle model}. We argue that under the {\it surfacing assumption} explained below, the oracle model coupled with greedily optimal action selection results in the quickest degeneracy. 

In order to analyze the problem concretely, we focus on the degenerate linear dynamics model for $\mu_t(a)$ for $a \in \M$, \ie~ $\mu_{t+1}(a) = (1+k) \mu_t(a) + b$. Then we can solve for $\mu_t(a)$, obtaining 
$$\mu_t(a) = (\mu_0(a) + \frac{b(a)}{k(a)})(1+k(a))^t - \frac{b(a)}{k(a)}\,,$$ for $|1+k(a)| > 1$ (see Appendix~\ref{app:deterministic}).

\emph{Surfacing Assumption:}
Let $[m]=\{1,2,\ldots,m\}$ be the candidate set of size $m$. If a subset of items $\s\subset[m]$ leads to positive degenerate dynamics (\ie~$\mu_t(a)\rightarrow +\infty$ for all $a\in\s$), then we assume that there exists a time $\tau > 0$ such that, for all $t \geq \tau$, $\s$ takes up the top $|\s|$ items in terms of values of $\mu_t$, sorted by the base value of the exponential function, $|1+k(a)|$.

The surfacing assumption makes sure that the quickest degenerating items surface out to the top list given enough time of exposure. It can be generalized to nonlinear stochastic dynamics of $\mu_t$ provided that the items from $\s$ have an almost surely stable ordering of degeneracy speed $|\mu_t(a) - \mu_0(a)|/t$ over time.

Under the general surfacing assumption, after time $\tau$, the quickest way to degeneration is to serve the top $l$ items according to $\mu_t$, or $\theta_t$ of the oracle model. Even if the assumption is violated to some degree, the oracle model still leads to degeneracy very efficiently by picking the top $l$ items according to $\mu_t$ which are likely to receive positive feedback due to high $\mu_t$, and therefore increasing $\mu_{t+1}$ and reinforcing the past choices.

In practice the recommender system models are inaccurate. We can think of inaccurate models as the oracle model with different levels of noises added to $\theta_t$. We discuss inaccurate models in the next section. 

\vspace*{-3.5mm}
\paragraph{Amount of Exploration.}
Consider a type of $\epsilon$-random exploration where $a_t$ always picks the top $l$ items out of a finite candidate pool $[m]$ with uniform $\epsilon$ noise on $\theta_t$, \ie~according to $\theta'_t = \theta_t + U([-\epsilon, \epsilon])$.

Given the same model sequence $\theta_t$, the bigger $\epsilon$ is, usually the slower the system degenerates. However, in practice $\theta_t$ is learned from observations, and the random exploration added to an oracle model may in fact accelerate degeneration: random exploration can help reveal the most positively degenerating items over time making the surfacing assumption more likely to be true (we show this phenomenon in the simulation experiments below, \figref{fig:noise_oracle}). In addition, if user interests have degenerative dynamics, even recommending items uniformly at random leads to degeneration, albeit quite slowly.

How do we then make sure that the recommender system does not make user interests degenerate? One way is to limit the number of times an item for which the user's interest dynamics is degenerative is served to the user. In practice it is hard to detect which items correspond to degenerative dynamics, however we can generally prevent degeneration if all items are served only a finite number of times, which suggests having an ever growing pool of candidate items.
\begin{figure}[t!]
\vspace{-3.5mm}
\centering
\subfloat[][{\it Optimal Oracle}, $\mu_t$\label{fig:OptimalOracle_mu}]{\includegraphics[height=1.1in,trim={1cm 0 0 0},clip]{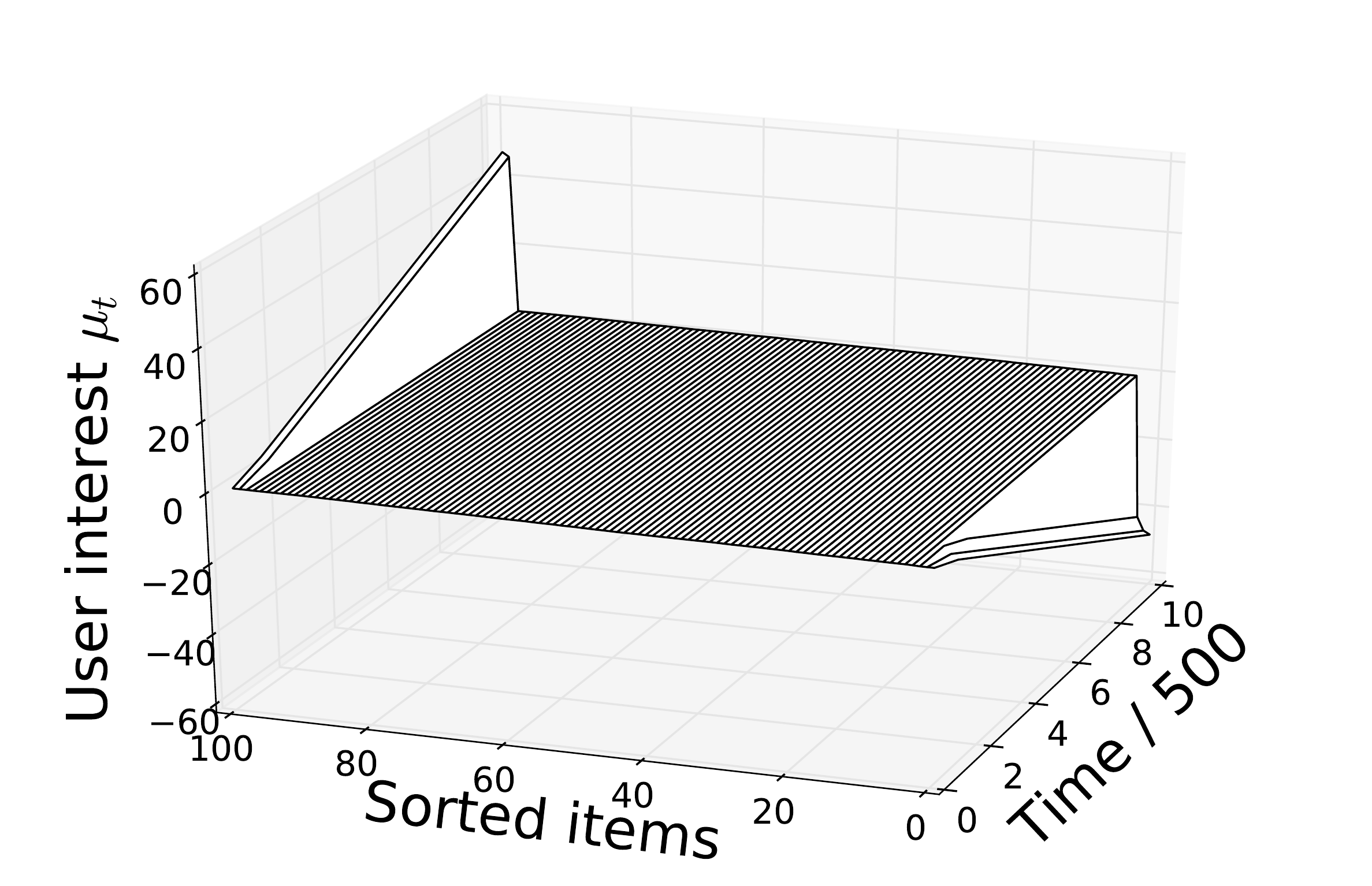}}
\subfloat[][{\it Optimal Oracle}, serving rate\label{fig:OptimalOracle_a}]{\includegraphics[height=1.1in,trim={1cm 0 0 0},clip]{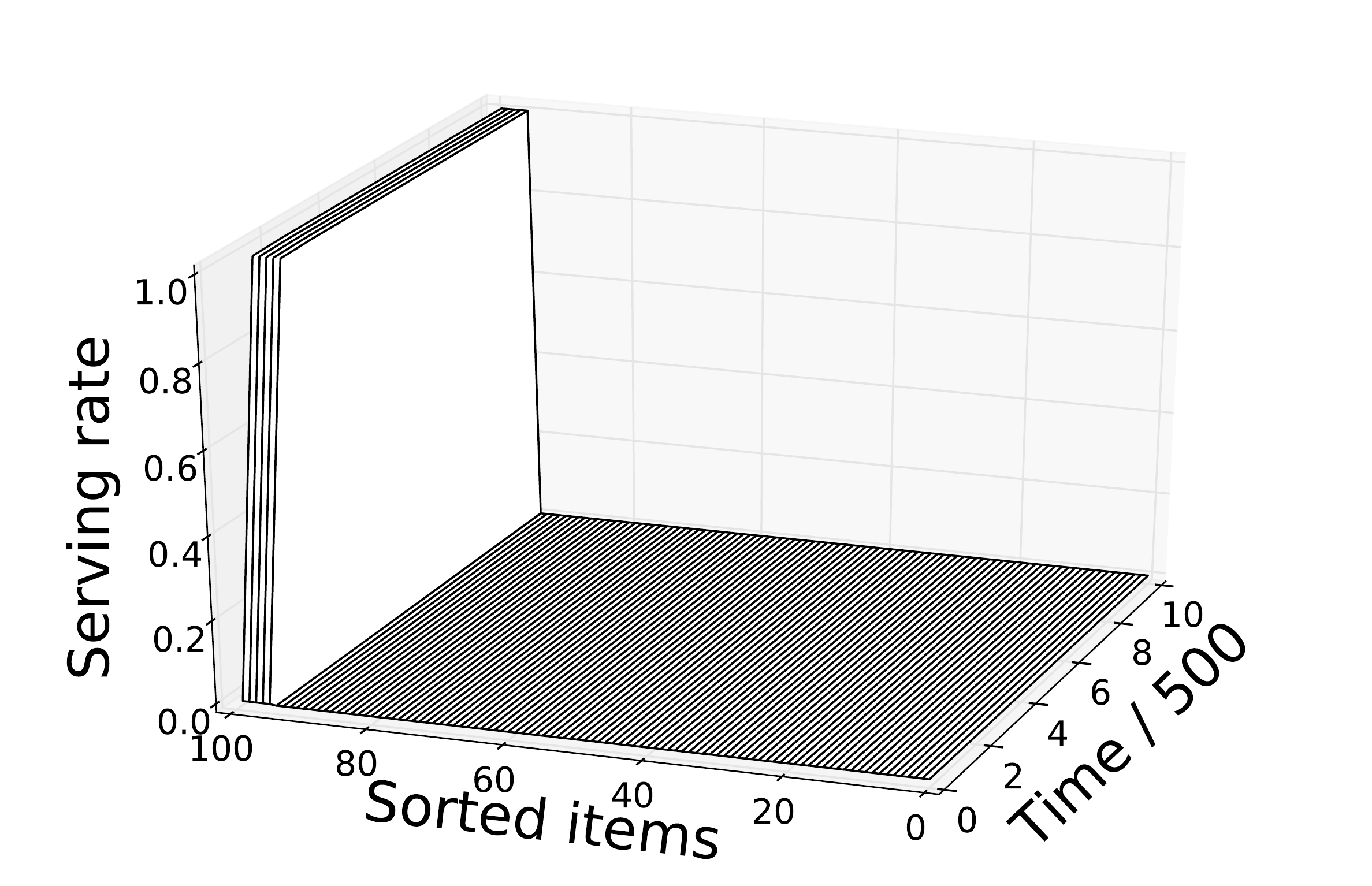}}
\hfill
\subfloat[][{\it Oracle}, $\mu_t$\label{fig:Oracle_mu}]{\includegraphics[height=1.1in,trim={1cm 0 0 0},clip]{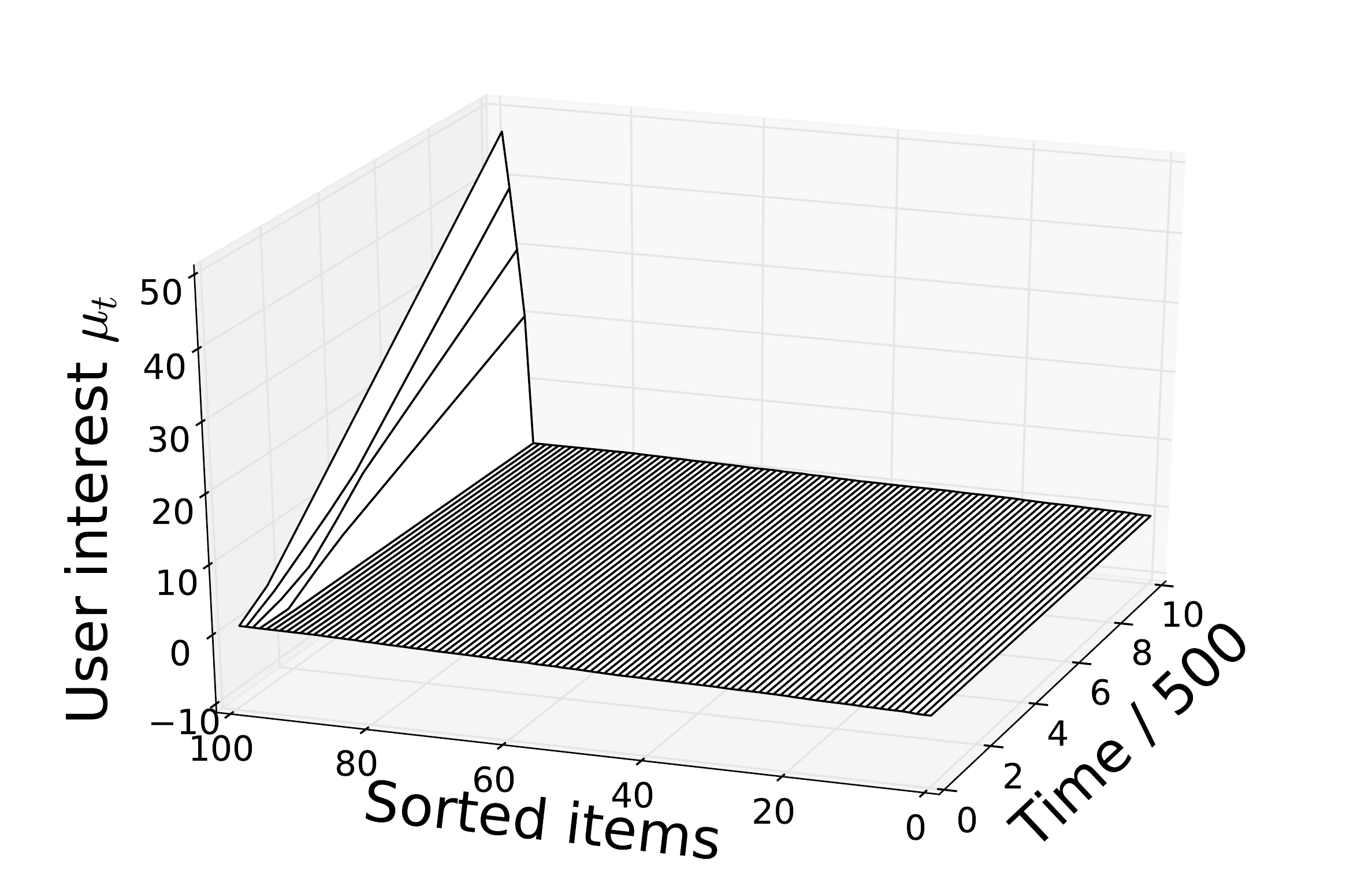}}
\subfloat[][{\it Oracle}, serving rate\label{fig:Oracle_a}]{\includegraphics[height=1.1in,trim={1cm 0 0 0},clip]{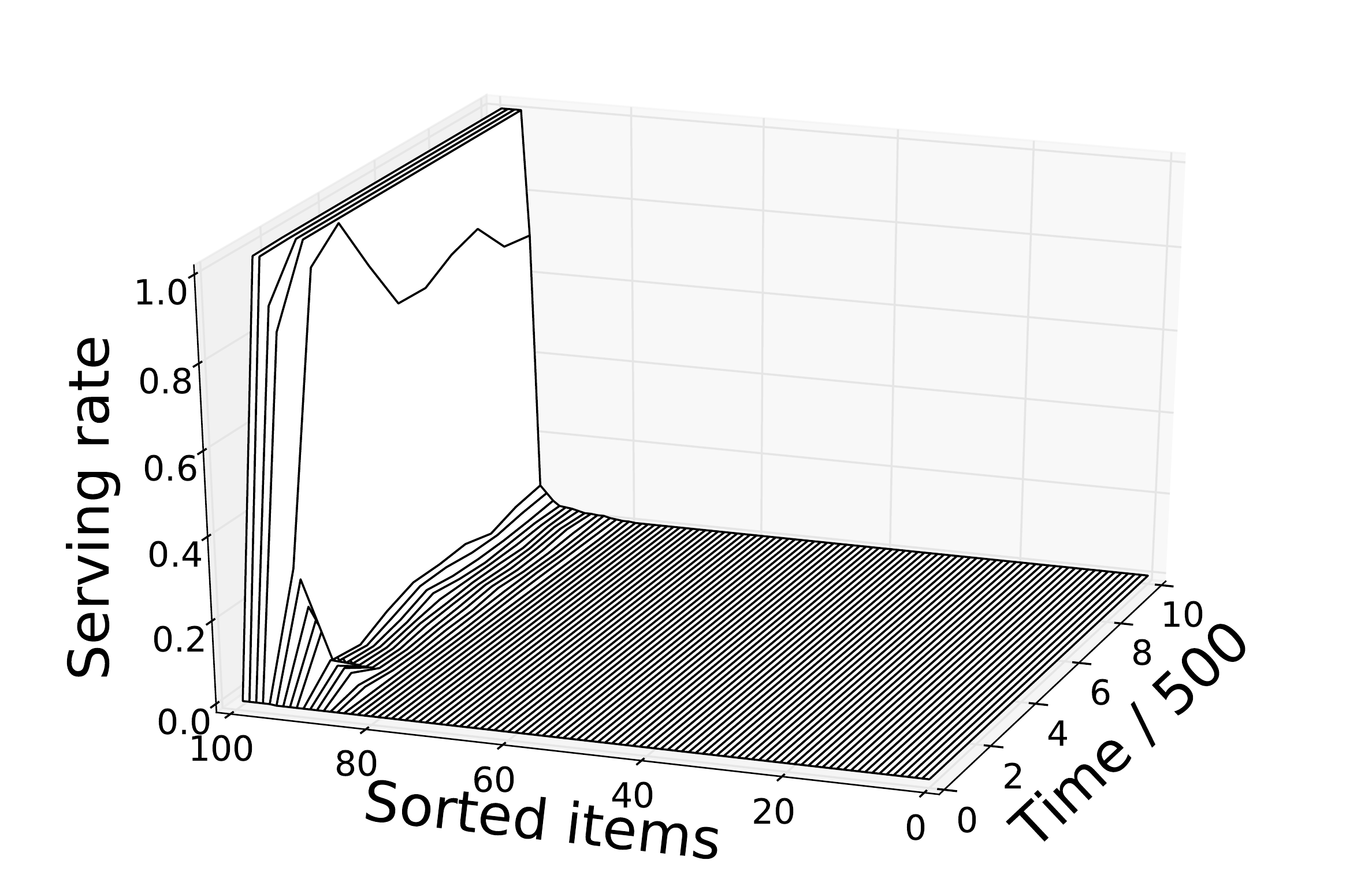}}
\hfill
\subfloat[][{\it TS}, $\mu_t$\label{fig:ThompsonSampling_mu}]{\includegraphics[height=1.in,trim={1cm 0 0 0},clip]{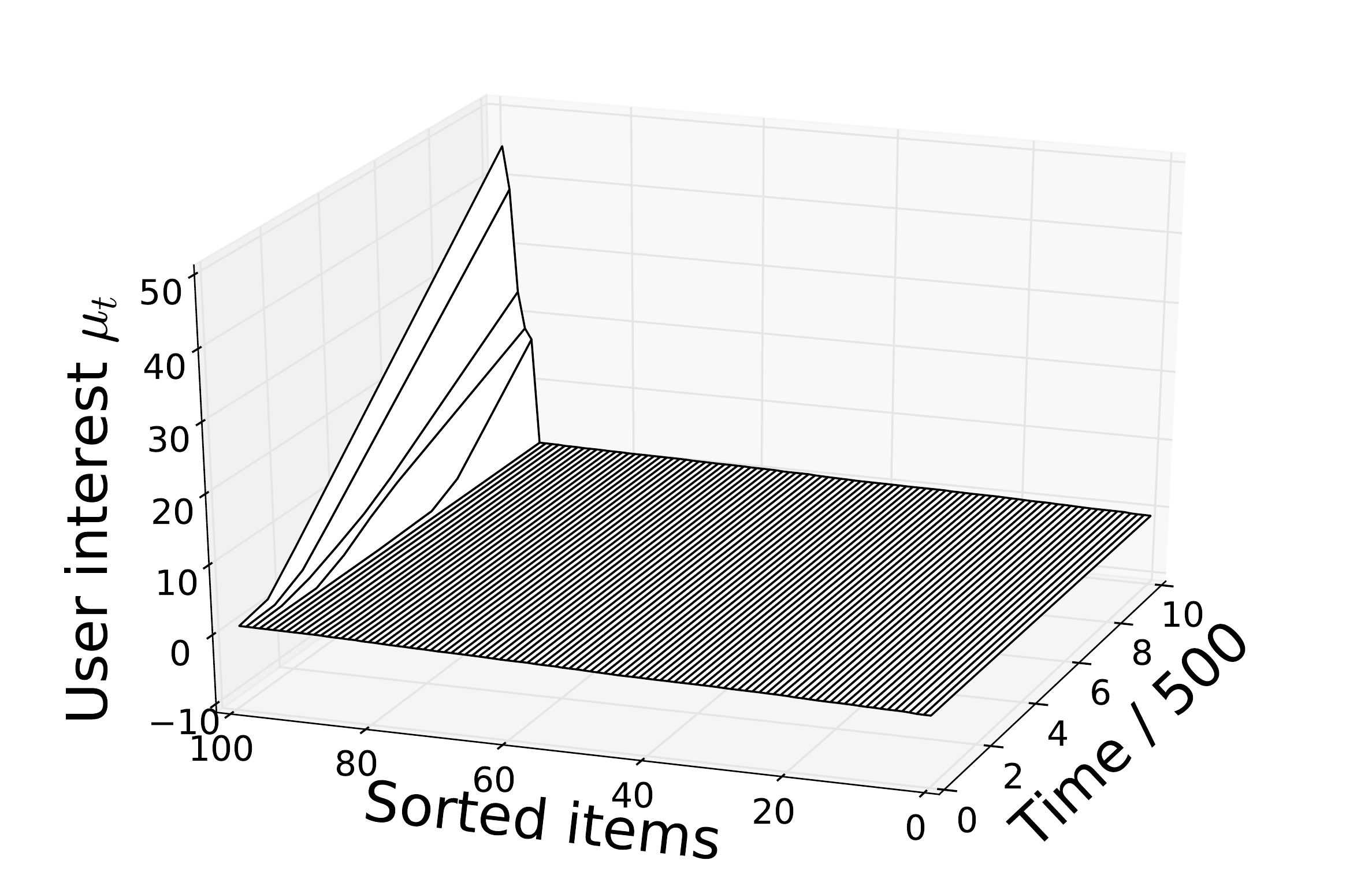}}
\subfloat[][{\it TS}, serving rate\label{fig:ThompsonSampling_a}]{\includegraphics[height=1.in,trim={1cm 0 0 0},clip]{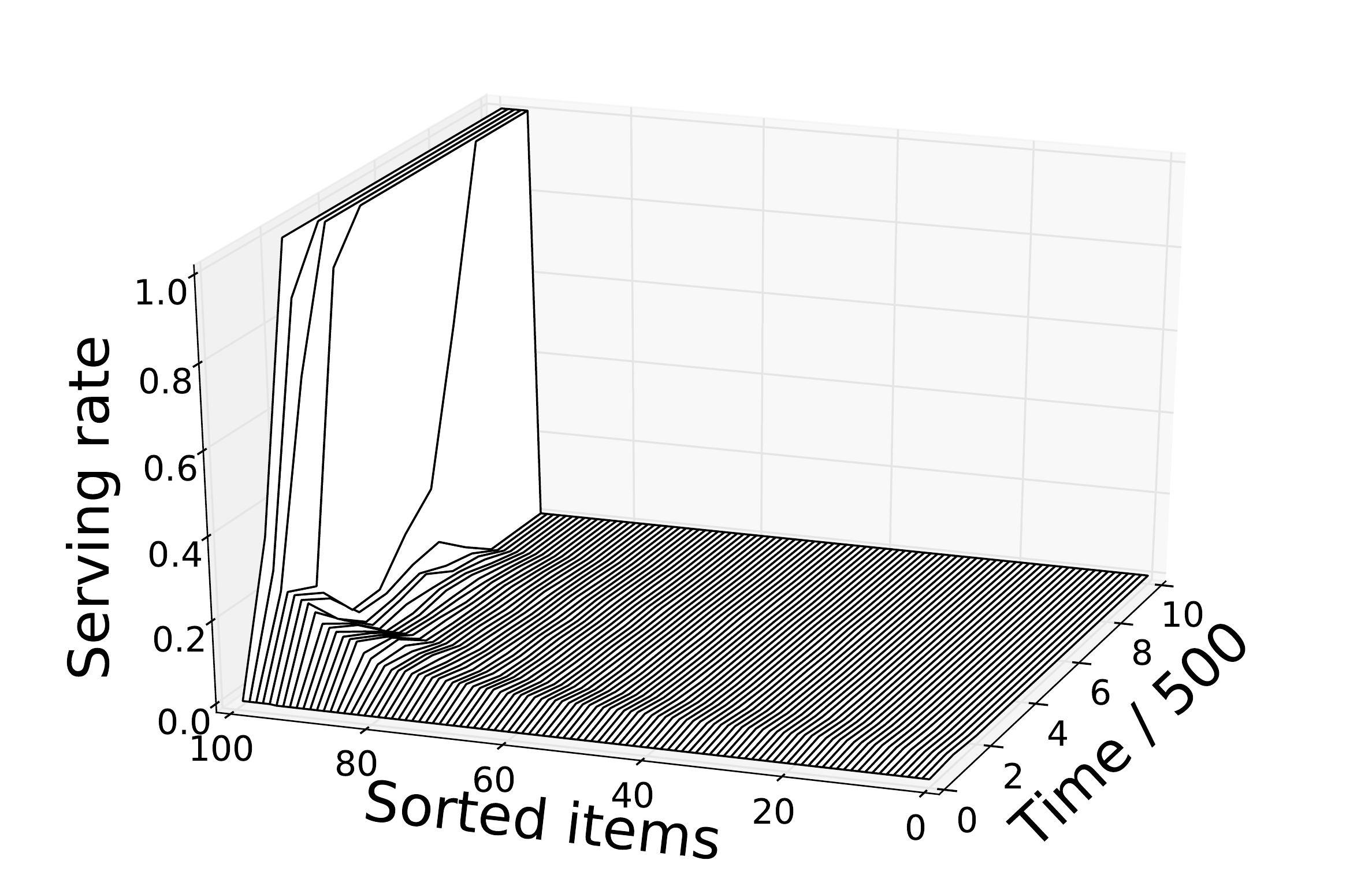}}
\hfill
\subfloat[][{\it UCB}, $\mu_t$\label{fig:UCB_mu}]{\includegraphics[height=1.1in,trim={1cm 0 0 0},clip]{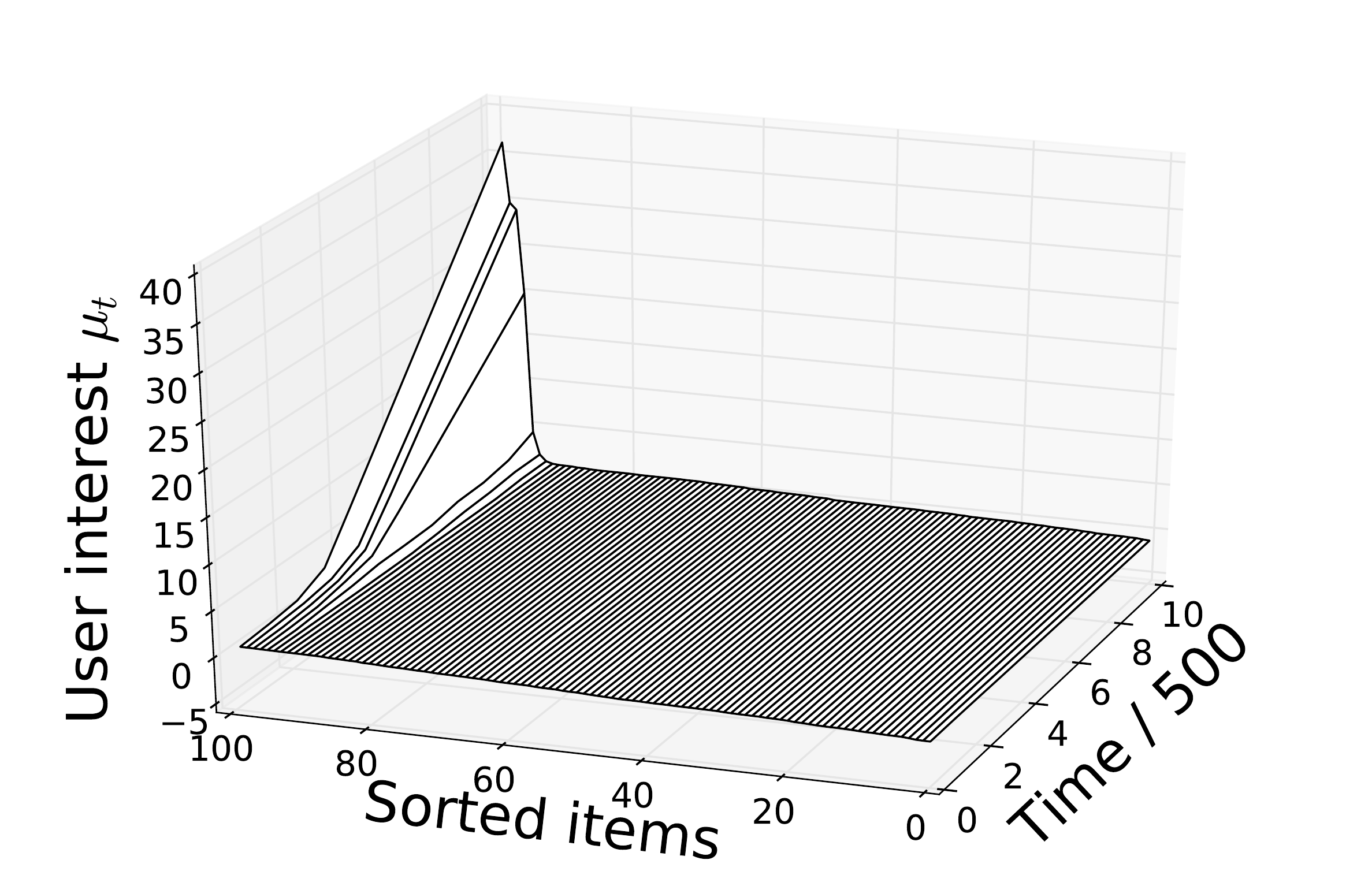}}
\subfloat[][{\it UCB}, serving rate\label{fig:UCB_a}]{\includegraphics[height=1.1in,trim={1cm 0 0 0},clip]{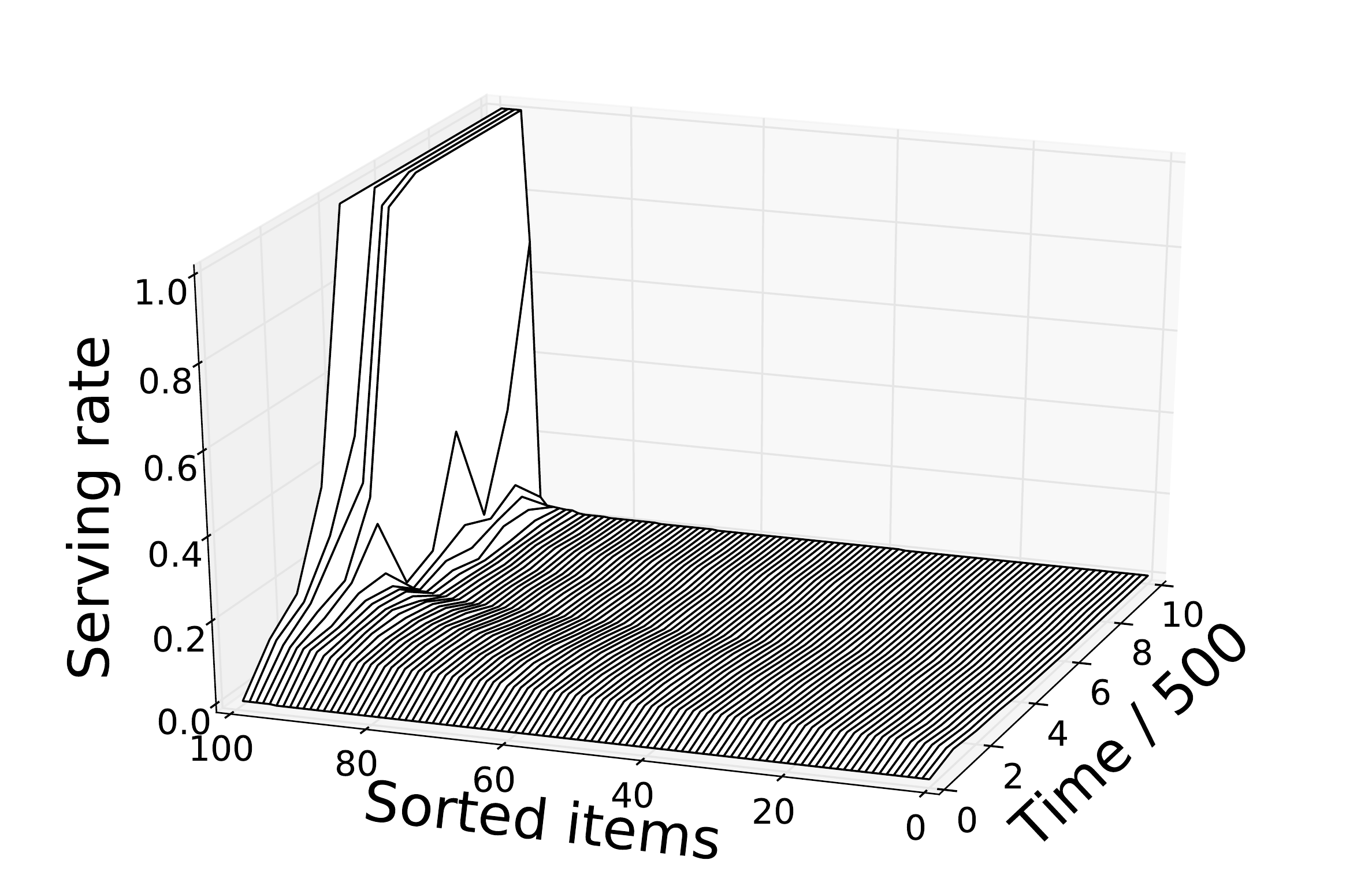}}
\hfill
\subfloat[][{\it Random}, $\mu_t$\label{fig:random_mu}]{\includegraphics[height=1.1in,trim={1cm 0 0 0},clip]{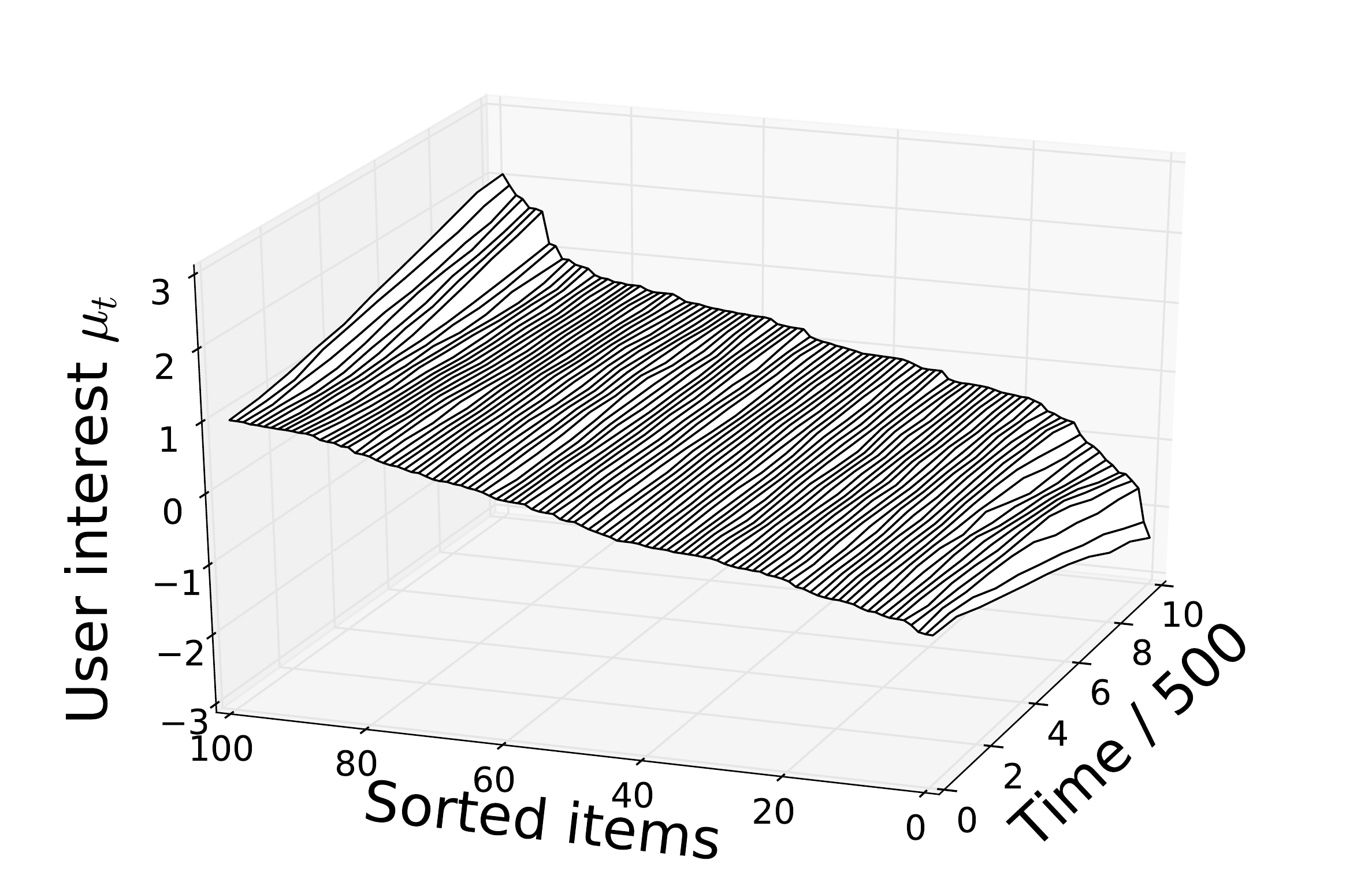}}
\subfloat[][{\it Random}, serving rate\label{fig:random_a}]{\includegraphics[height=1.1in,trim={1cm 0 0 0},clip]{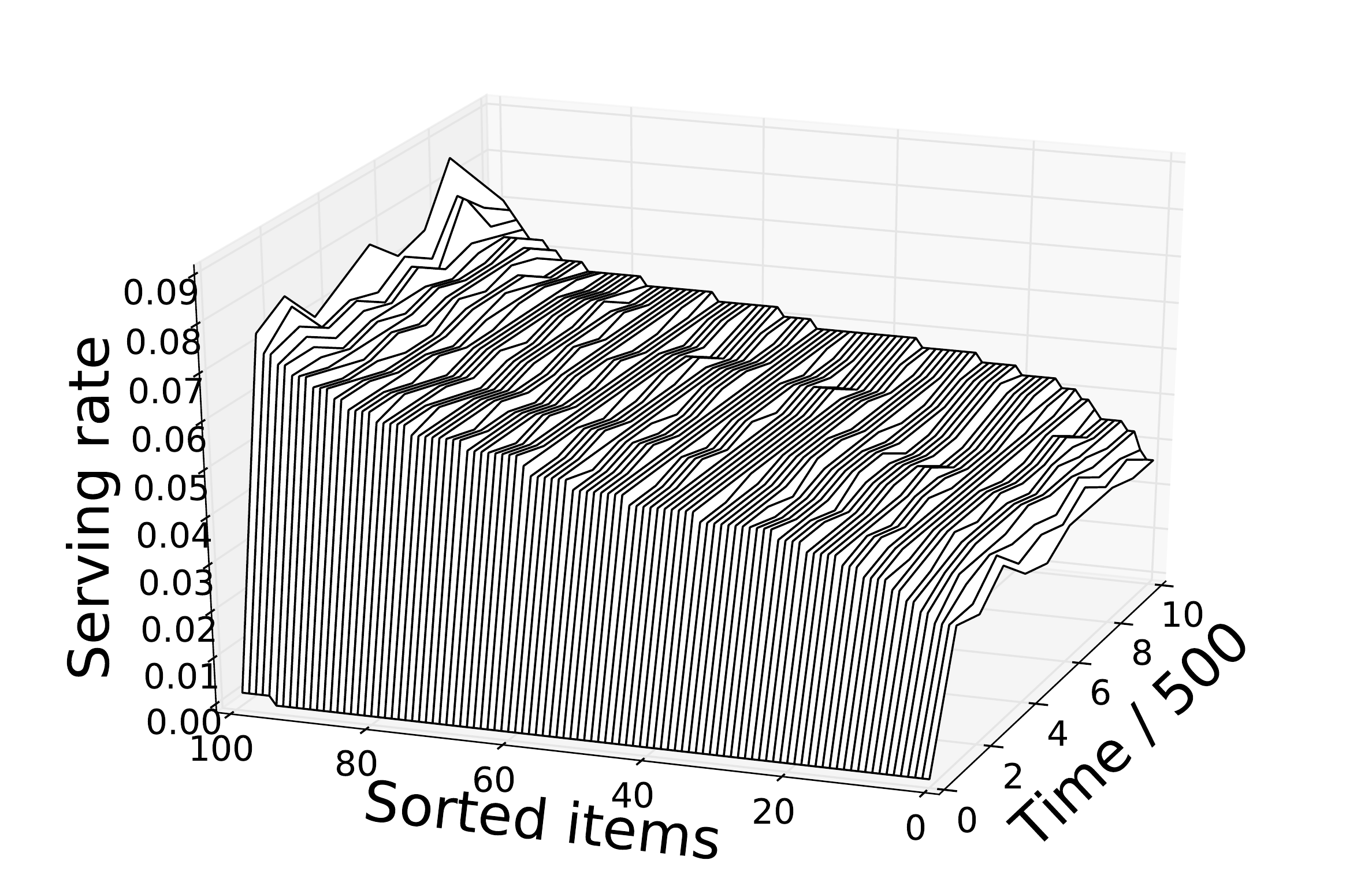}}
\caption[]{Echo chamber and filter bubble effect for {\it Optimal Oracle, Oracle, TS, UCB} and {\it Random} models. Sorted user interest $\mu_t$ and serving rates are plotted every 500 steps. Under all models except for the {\it Random Model}, very quickly both the top items served and the top user interests narrow down to the ($l=$) 5 most positively reinforced items.}
\label{fig:sim_filter_bubble}
\end{figure}

\vspace*{-3.5mm}
\paragraph{Growing Candidate Pool $\M$.}
With a growing candidate pool, at every time step an additional set of new items becomes available to be served to the user. Hence the domain of the function $\mu_t$ expands as $t$ increases. Adding new items at least linearly often 
is a necessary condition to avoid possible degeneration, since in a finite or any sublinearly growing candidate pool, by the pigeon hole principle, there must exist at least one item that is served \io, which is degenerate in the worst case scenario (also under general conditions described \eg~in Theorem~\ref{thm:strong}). However, with an at least linearly growing candidate pool $\M$ the system can potentially impose the maximum number of times any item is served to a user and prevent degeneration.

\section{Simulation Experiments}\label{sec:simulation}
In this section, we consider a simple degenerative dynamics for $\mu_t$ and examine degeneration speed under five different recommender system models. We further demonstrate that adding new items to the candidate pool can be an effective solution against system degeneracy.

We create a simulation for the model of interaction between a recommender system and a user of \figref{fig:state_graph}. Consider a possibly growing candidate pool of items of initial size $m_0$ and of size $m_t$ at time step $t$. At each time step $t$, a recommender system picks the top $l$ out of the $m_t$ items $a_t = (a_t^1, \ldots, a_t^l)$ according to the internal model $\theta_t$ to serve to a user. The user considers each of the $l$ items independently and chooses to click on a (possibly empty) subset of them, thereby generating a binary vector $c_t$ of size $l$ where $c_t(a_t^i)$ gives the user feedback on item $a_t^i$, according to $c_t(a_t^i) \sim Bernoulli(\phi(\mu_t(a_t^i)))$, where $\phi$ is the sigmoid function $\phi(x)=1/(1+e^{-x})$. The system then updates the model $\theta_{t+1}$ based on the past actions, feedbacks and the current model parameter $\theta_t$. We assume that the user's interest increases/decreases by $\delta(a')$ if the item $a'$ receives/does not receive a click, \ie~
\begin{align}
    &\mu_{t+1}(a_t^i) - \mu_t(a_t^i) =  
    \begin{cases}
    \delta(a_t^i)  & \text{ if } c_t(a_t^i)=1,\\
    -\delta(a_t^i)  & \text{ otherwise, }
    \end{cases}
\end{align}
where the function $\delta$ maps from the candidate set to $\mathbb{R}$. From Theorem~\ref{thm:strong}, we know that $\mu_t\rightarrow\pm\infty$ for every item. 
In the experiment, we set $l=5$ and 
sample the drift $\delta$ from a uniform random distribution $U([-0.01, 0.01])$. The user's initial interest $\mu_0$ for all items is independently sampled from a uniform random distribution $U([-1, 1])$. 

\begin{figure}[t]
\centering
\includegraphics[height=1.9in]{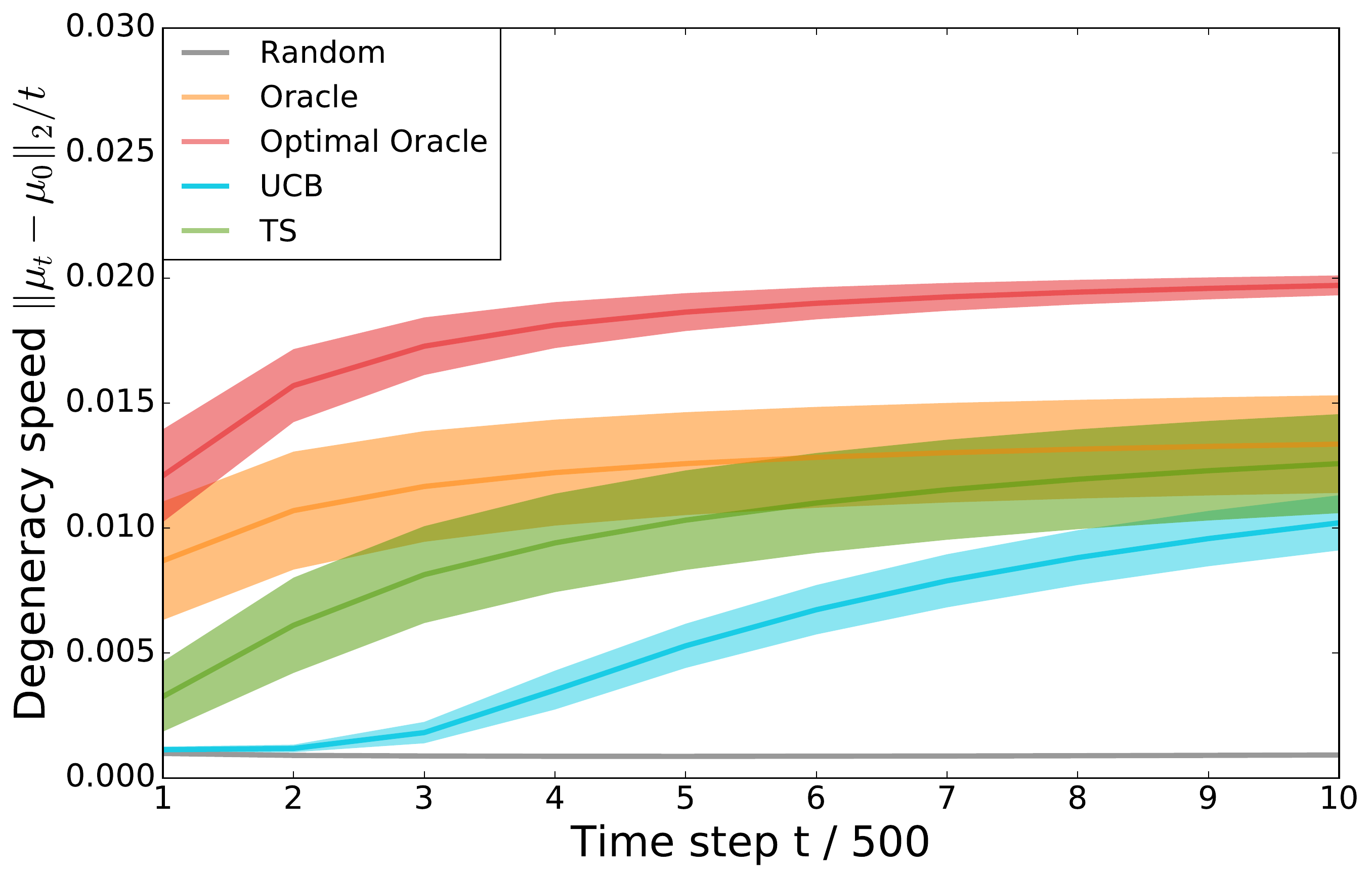}
\caption[]{System evolves for 5,000 time steps with report interval 500. The results are averaged over 30 runs with the shaded area indicating the standard deviation. In terms of the degeneracy speed, {\it Optimal Oracle} $>$ {\it Oracle} $>$ {\it TS} $>$ {\it UCB} $>$ {\it Random}.}
\label{fig:m_100}
\end{figure}
\begin{figure}[t]
\centering
\subfloat[][Degeneracy rate vs candidate pool size vs time \label{fig:changing_m_T}]{\includegraphics[height=1.5in,trim={0.5cm 1.5cm 0 3cm},clip]{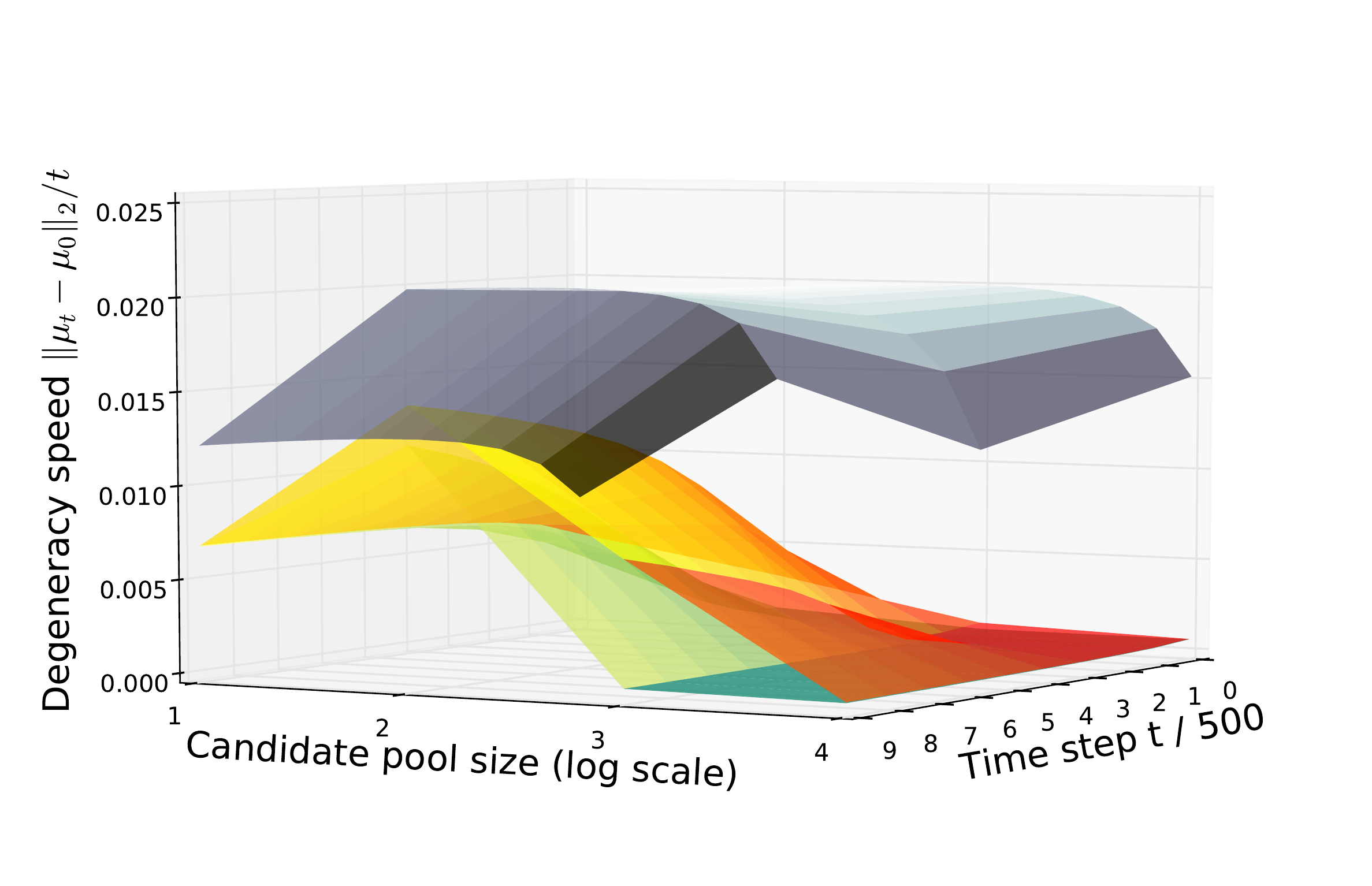}}
\hfill
\subfloat[][Degeneracy rate vs candidate pool size\label{fig:changing_m}]{\includegraphics[height=1.9in]{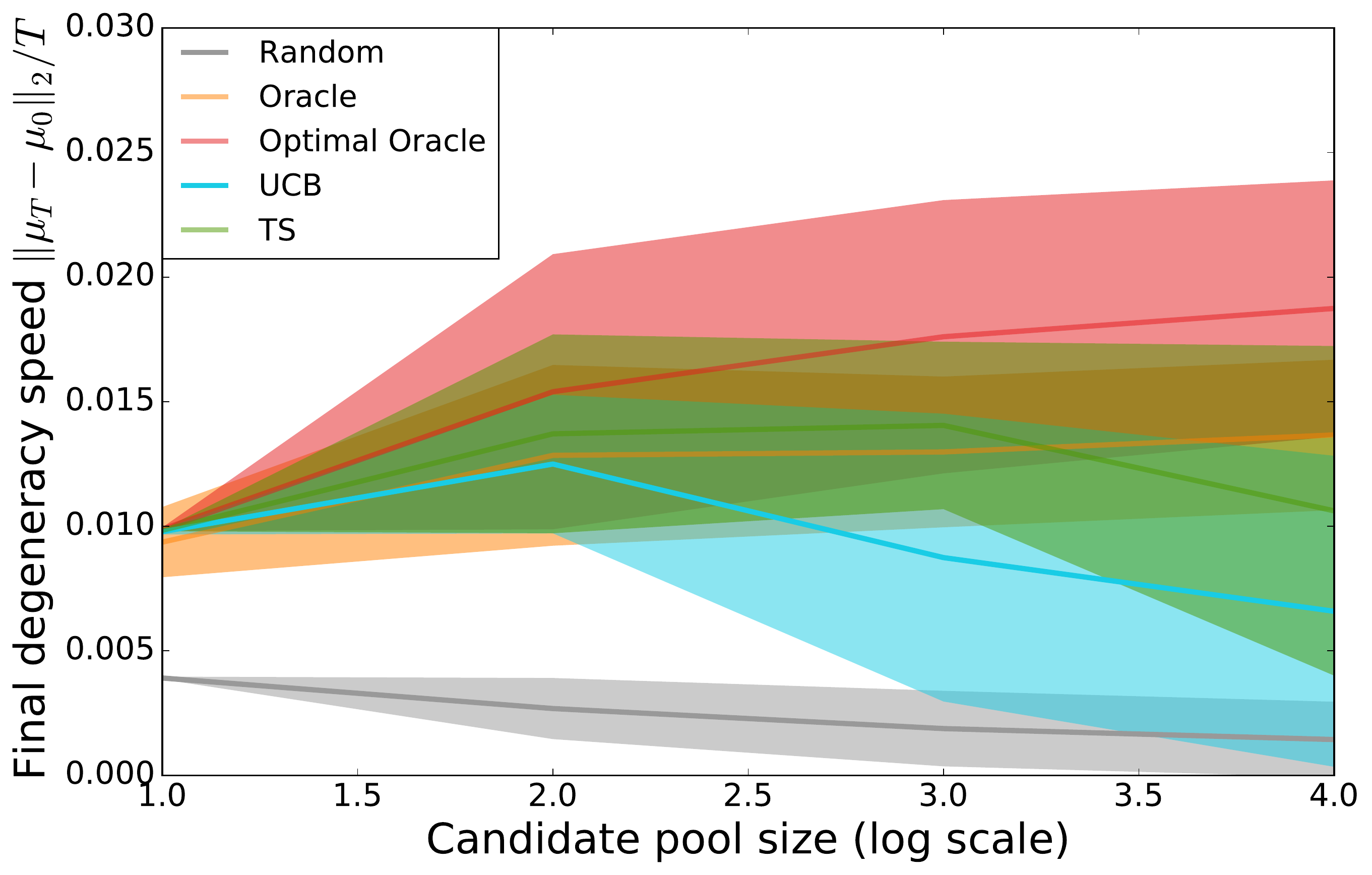}}
\caption[]{
\subref{fig:changing_m_T} Degeneracy surfaces for {\it Optimal Oracle} (grey), {\it UCB} (green) and {\it TS} (orange) up to time $T=5,000$ while varying candidate pool sizes $\log_{10} m = 1, 2, 3, 4$. A larger candidate pool requires a longer time for exploration for the bandit algorithms, but among the three models {\it UCB} slows down system degeneracy the most given a large candidate pool. 
\subref{fig:changing_m} Degeneracy speeds at $T=20,000$ of {\it Optimal Oracle} and the {\it Oracle} are higher given a larger size of the candidate set, but those of the {\it and Random Model, UCB}, and {\it TS} are lower.}
\label{fig:fixed_pool}
\end{figure}
\begin{figure}[t]
\centering
\includegraphics[height=1.9in]{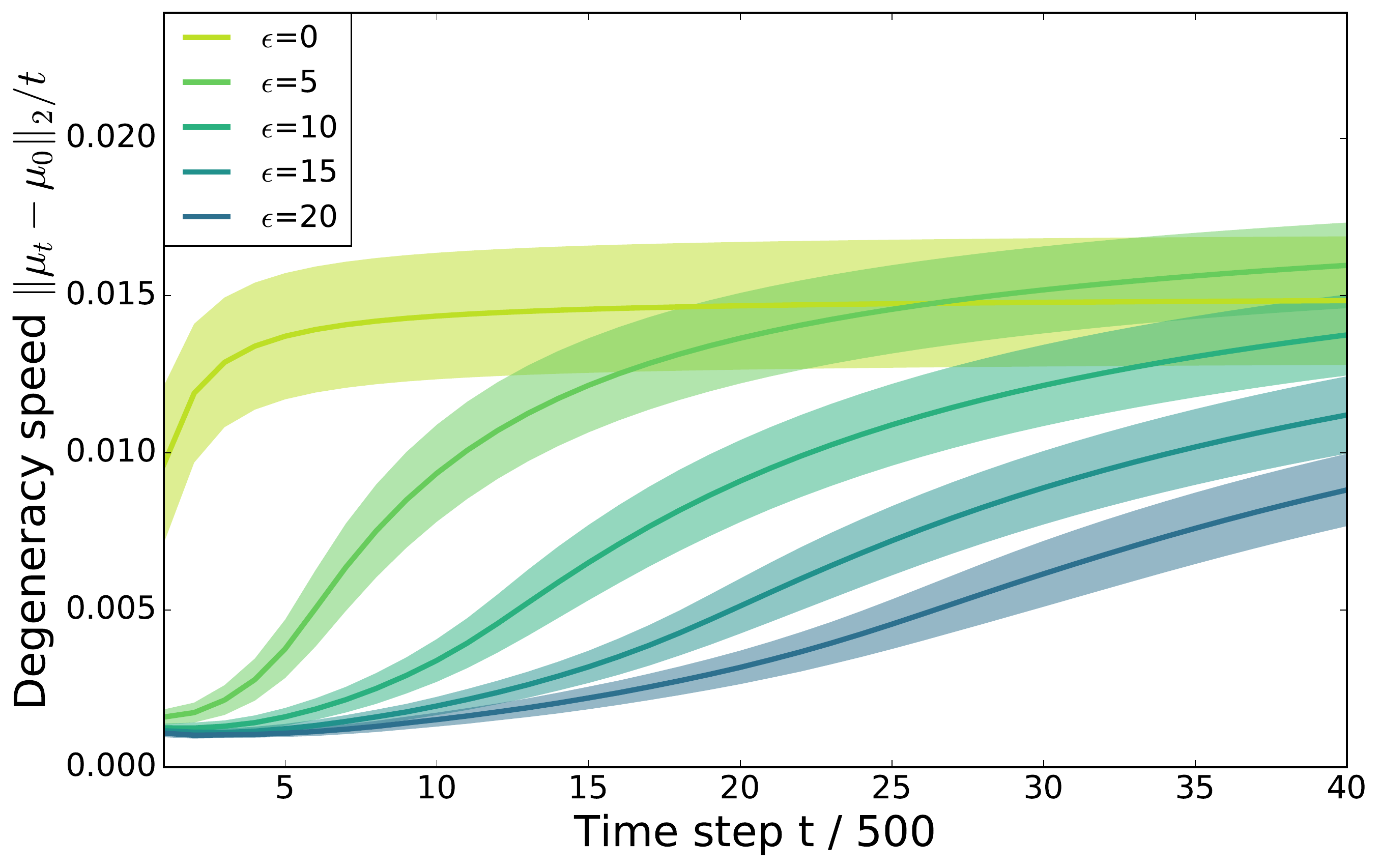}
\caption[]{Degeneracy speed for the {\it Oracle} model with different noise levels $\epsilon\in[0, 10]$ up to $T=20,000$. Adding noise to {\it Oracle} ($\epsilon=0$) accelerates degeneration but as the noise level grows, degeneracy slows down.}
\label{fig:noise_oracle}
\end{figure}
The internal recommender system model is updated according to following five algorithms:
\begin{itemize}
\item {\bf Random Model:}
Instead of picking top items, the set of items $a_t = (a_t^1, \ldots, a_t^l)$ is sampled from a uniform random distribution over the candidate set $U([m_t])$.
\item {\bf Oracle:}
$\theta_{t+1}(a_t^i) = \mu_{t+1}(a_t^i)$, $\forall i$. 
\item {\bf Optimal Oracle:}
$\theta_{t+1}(a_t^i) = \delta(a_t^i)$, $\forall i$. 
This model does not pick the highest $l$ items according to $\mu_t$ but according to $\delta$. Thus, it always picks the fastest degenerating items, therefore  maximizing both the long term user engagement $\sum_{t=0}^\infty \|c_t\|_1$ and the degeneracy speed. For a fixed candidate pool $\M$, this model is equivalent to an \emph{Oracle} that satisfies the Surfacing Assumption. 
\item {\bf Upper Confidence Bound Multi-armed Bandit Algorithm ({\it UCB})} \citep{lai87adaptive,auer02finite,lattimore19bandit}:
We use the version of UCB algorithm in Chapter 8 of \citet{lattimore19bandit}, however most UCB algorithms perform similarly to the purpose of this experiment. The algorithm prioritizes serving any item from the candidate set that has never been served before. This treatment includes the initial $m_0$ items as well as later whenever new items are added to the candidate pool. At time step $t$, UCB serves $l' (0\leq l'\leq l)$ previously unserved items and the top $l-l'$ items according to values of $\theta_t$. Define $f(t) = 1 + t\log^2(t)$ and we use the following model update
$\theta_{t+1} (a) = \hat{c}_t(a) + \sqrt{2\log f(t)/T_a(t)}$, where $\hat{c}_t$ is the empirical average of feedbacks on item $a$, \ie~$\hat{c}_t(a) = \sum_{0 \le i \le t, a\in a_i}^t c_t(a)/T_a(t)$, and $T_a(t)$ is the number of times item $a$ has been served up to time $t$, i.e. $T_a(t) = \sum_{0 \le i \le t, a\in a_i} 1$. 
\item {\bf Thompson Sampling Multi-armed Bandit Algorithm ({\it TS})} \citep{thompson33on}:
We initialize $\alpha_0(a)=1, \beta_0(a)=1$ for any new item $a$. If $a$ is served at time $t$, we perform the update $\alpha_{t+1}(a) = \alpha_t(a) + c_t(a), \beta_{t+1}(a) = \beta_t(a) + 1 - c_t(a)$. At any time $t$, the internal model $\theta_t$ is sampled from the corresponding beta distribution $Beta(\alpha_t, \beta_t)$.
\end{itemize}

\subsection{Echo Chamber \& Filter Bubble Effect}
We examine the echo chamber and filter bubble effects by running the simulation on a candidate pool of fixed size $m_t=m = 100$ with time horizon $T=5,000$. 

In \figref{fig:sim_filter_bubble}, we show the degeneration of user interest $\mu_t$ (left column) and the serving rate (right column) of every item as each recommender model evolves in time. The serving rate of an item shows how often it is served within the report interval. In order to see the distribution clearly, we sort the items according to the z-values at the report time. Although all models cause user interest degeneration, the degeneration speeds are quite different ({\it Optimal Oracle} $>$ {\it Oracle, TS, UCB} $>$ {\it Random Model}). The {\it Oracle, TS} and {\it UCB} optimize based on $\mu_t$ and so we see a positive degenerative dynamics for $\mu_t$. The {\it Optimal Oracle} optimizes on the degeneration speed directly and not on $\mu_t$ so we see both a positive and negative degeneration in $\mu_t$. The {\it Random Model} also drifts $\mu_t$ in both directions, but at a much slower rate. However, overall except for the {\it Random Model}, very quickly both the top items served and the top user interests narrow down to the ($l=$) 5 most positively reinforced items. 

\subsection{Speed of Degeneracy}
Next, we compare the degeneracy speed for the five recommender system models on both fixed and growing candidate sets. As the $L^2$ distance that measures system degeneracy is asymptotically linear for all five models (see Appendix~\ref{app:linear_speed}), we quantify degeneracy speeds by compare empirically $\|\mu_t-\mu_0\|_2 / t$ in finite candidate pools for different experiment setups. 

Figure~\ref{fig:m_100} shows the degeneracy speed of five models averaged across 30 runs when we take $m=100$ and evolve the system for $T=5,000$ steps. We see that the {\it Optimal Oracle} results in the fastest degeneration by far, followed by the {\it Oracle, TS} and {\it UCB}. The {\it Random Model} offers the slowest degeneracy speed. 

\vspace*{-3mm}
\paragraph{The Effect of Candidate Pool Size.} In \figref{fig:changing_m_T} we compare the {\it Optimal Oracle}, {\it UCB} and {\it TS}' degeneracy speed $\|\mu_t-\mu_0\|_2 / t$ up to 5,000 time steps and candidate pool sizes $m = 10, 10^2, 10^3, 10^4$. Apart from the {\it Random} model, we see that {\it UCB} slows down system degeneracy the most given a large candidate pool since it is forced to explore any unserved item first. A larger candidate pool requires a longer time for exploration for the bandit algorithms. As the candidate pool size grows to 10,000 {\it UCB}'s degeneracy speed never peaks up given the time horizon, but will eventually grow given a longer time. {\it TS} has higher degeneracy speed due to weaker exploration on new items. The {\it Optimal Oracle} accelerates degeneration given a larger pool, as it can pick potentially faster degenerative items than from a smaller pool. 

Additionally, in \figref{fig:changing_m} we plot all five models degeneracy speed for $T=20,000$ against the same changing candidate pool sizes. The degeneracy speed of the {\it Optimal Oracle} and the {\it Oracle} increases with the size of the candidate set, but that of the {\it and Random Model, UCB}, and {\it TS} decreases. In practice, having a large candidate pool can be a temporary solution to slow down system degeneration.

\vspace*{-3mm}
\paragraph{The Effect of the Noise Level.} 
Next we show the influence of internal model inaccuracy on degeneracy speed. We compare the {\it Oracle} model with different amounts of uniformly random noises, \ie~the system serves the top $l$ items according to the noisy internal model $\theta'_t = \theta_t + U([-\epsilon, \epsilon])$. The candidate pool has fixed size $m=100$. In \figref{fig:noise_oracle}, we vary $\epsilon$ from 0 to 10. Counter-intuitively adding noise to {\it Oracle} accelerates degeneration since faster degenerative items may be selected by chance than those fixed set of top $l$ items ranked by $\mu_0$, and more likely satisfies the Surfacing Assumption. Given $\epsilon>0$, as expected, we see a nice monotonically increasing damping effect on degeneracy speed as the noise level grows.

\begin{figure}[t]
\centering
\hspace*{-.4cm}  
\includegraphics[height=1.6in]{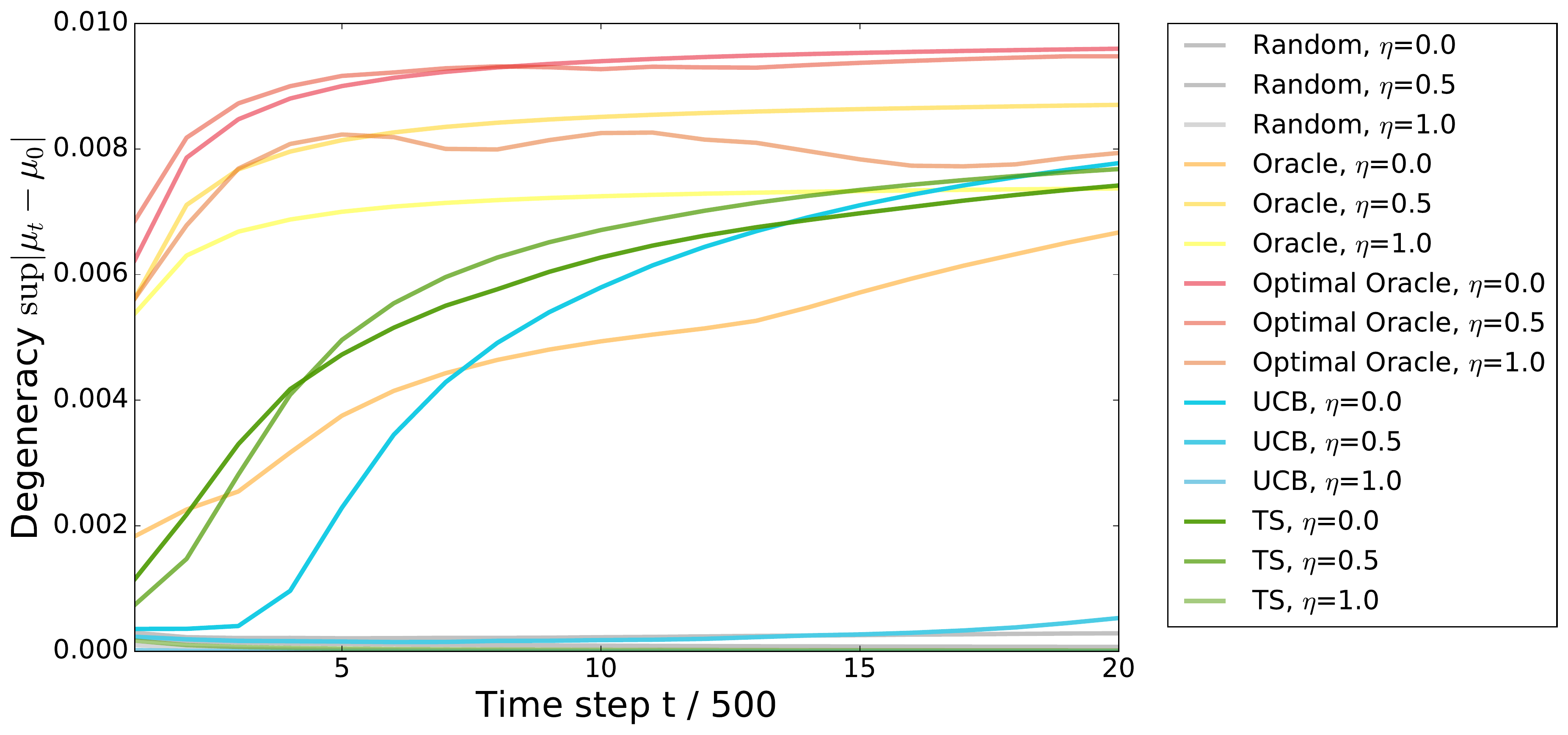}
\caption[]{Comparison of the five models with growing candidate pools at different rates $\eta=0, 0.5, 1.0$, degeneracy up to $T=10,000$, averaged over 10 runs. Both the {\it Oracle} and the {\it Optimal Oracle} for all growth rates are degenerate. The {\it Random Model} and {\it UCB} stop degeneration at sublinear growth while {\it TS} model requires linear growth to stop degeneration.}
\label{fig:growing_pool}
\end{figure}

\vspace*{-3mm}
\paragraph{Growing Candidate Pool.} 
We extend the definition of degeneracy speed to an infinite candidate pool by computing $\sup_{a\in\M}|\mu_t(a)-\mu_0(a)| / t$ (see Appendix~\ref{app:linear_speed} for an asymptotic analysis). Since the degeneracy speed may not be asymptotically linear for all five models, we examine directly the sup distance $\sup_{a\in\M}|\mu_t(a)-\mu_0(a)|$ over 10,000 time steps. To construct growing candidate pools at different growth speed, we define a growth function $m_t = \left\lfloor m_0 + l t^{\eta} \right\rfloor$ by varying the growth parameter\footnote{$\eta=0$ gives a fixed candidate pool, $0<\eta<1$ gives sub-linear growth, $\eta=1$ gives linear growth.} $\eta=0, 0.5, 1$, where $m_0=100$. In \figref{fig:growing_pool} we average the results over 10 independent runs. Both the {\it Oracle} and the {\it Optimal Oracle} for all growth rates are degenerate. The {\it Random Model} stops degeneration at sublinear growth, $\eta=0.5$, so does {\it UCB} thanks to forced exploration on previously unserved items, although its trajectory has a small upward tilt. The {\it TS} model degenerates at sublinear growth but stops degeneration at linear growth $\eta=1$. For all models, the higher the growth rate $\eta$, the slower they degenerate, if they do at all. Overall when applicable, an ideally linearly growing candidate set and continuous random exploration seem to be good remedies against an adversarial dynamics of $\mu_t$ to best prevent degeneracy.  

\section{Conclusion}
We provided a theoretical analysis of the echo chamber and filter bubble effects for recommender systems. We used the dynamical system framework to model user's interest and treated interest extremes as degeneracy points of the system. We gave formal definitions of system degeneracy and provided sufficient conditions which make the system degenerate with both deterministic and stochastic dynamics. On the recommender system side, we discussed the influence on degeneracy speed of three independent factors in system design, \ie~model accuracy, amount of exploration, and the growth rate of the candidate pool. An oracle model often leads to quick degeneracy of the system, while continuous exploration and a large candidate pool size can help slow it down. The best remedies against system degeneracy we found are continuous random exploration and growing the candidate pool at least linearly. 

Our work has two main limitations. First, since user interests are hidden variables that are not directly observed, a good measure or proxy for user interests is necessary in practice to study degeneration reliably. Second, we assumed that items and users are independent from each other -- we will extend the theoretical analysis to the case of possibly mutually dependent items and users in a future work.

\section{Acknowledgments}
We would like to thank William Isaac, Michael Mathieu, Krishnamurthy Dvijotham, Timothy Mann and Dilan Gorur, for helpful discussions and advice.

\bibliography{aies}
\bibliographystyle{aaai}

\begin{appendices}
\setcounter{secnumdepth}{1}  

\section{Proofs} \label{app:proofs}
\begin{proof}[Proof of Theorem~\ref{thm:weak}]
Let $\bbP_\mu$ denote the measure carrying $(\mu_t)_{t=1}^\infty$ when $\mu_0 = \mu$.
Assume without loss of generality 
that $\mu_{\circ} = 0$ and let $B > 0$ be
arbitrary. The result will follow by showing that for all
$\mu \in [-B,B]$ it holds that
\begin{align}
    \bbP_\mu(\text{exists } t : \mu_t \notin [-B, B]) = 1\,.
    \label{eq:as-B}
\end{align}
To see this suppose that
\begin{align*}
    \bbP_\mu(\limsup_{t\to\infty} |\mu_t| < \infty) > 0\,. 
\end{align*}
Then there exists a $ B > 0$ such that
\begin{align*}
    \bbP_\mu(\text{exists } t : \mu_t \notin [-B, B]) < 1\,,
\end{align*}
which is a contradiction.
In order to prove (\ref{eq:as-B}) notice that compactness of $[0,B]$ and $[-B,0]$ and
the continuity of $F$ at $(\mu,0 )$ for all $\mu$ ensures there exists
an $\epsilon > 0$ depending only on $B$ such that
$1 - F(\mu,\epsilon) \geq \epsilon$ for all $\mu \in [0,B]$ and $F(\mu,-\epsilon) \geq  \epsilon$ for all $\mu \in [-B, 0]$.
Hence for $n = 1 + B/\epsilon$
\begin{align*}
    \inf_{\mu \in [-B, B]} \bbP_\mu(\text{exists } t \leq n : \mu_t \notin [-B,B]) \geq \epsilon^n\,.
\end{align*}
Let $E_k$ be the event that $\mu_t \in [-B,B]$ for all $t \in \{nk+1,\ldots,n(k+1)+1\}$.
Then
\begin{align*}
\bbP_\mu(\mu_t \in [-B, B] &\text{ for all } t) 
= \bbP_\mu(E_k \text{ for all } k \geq 0) \\
&= \prod_{k=0}^\infty \bbP_\mu(E_k \mid E_1,\ldots,E_{k-1})\\
&\leq \prod_{k=0}^\infty (1 - \epsilon^n) = 0\,,
\end{align*}
which completes the proof.
\end{proof}

\begin{proof}[Proof of Theorem~\ref{thm:strong}]
We prove that if $B$ is sufficiently large and $\mu > B$ and $\tau = \min\{t : \mu_t \in [-B, B]\}$, then
\begin{align}
\bbP_\mu\left(\lim_{t\to\infty} \mu_t = \infty\right) > \frac{1}{2}
\label{eq:tech1}
\end{align} 
and 
\begin{align}
\bbP_\mu\left(\lim_{t\to\infty} \mu_t = \infty\text{ or } \tau < \infty\right) = 1\,.
\label{eq:tech2}
\end{align}
For $\mu < -B$ the same holds, but with $\mu_t$ tending to $-\infty$. To see why this implies the result, notice that the previous theorem shows that $\mu_t$ eventually leaves $[-B, B]$ almost surely. Each time this happens there is more than 0.5 probability of divergence and a certainty of either divergence or returning to $[-B, B]$. The conditional Borel-Cantelli theorem concludes the proof. To see
why (\ref{eq:tech1}) and (\ref{eq:tech2}) hold, let
\begin{align*}
M_n = \sum_{t=1}^n (\mu_{t+1} - \mu_t - \bar f(\mu_t))\,,
\end{align*}
which is a martingale with bounded increments since $|\mu_{t+1}-\mu_t|$ are bounded. Now $M_{\tau \wedge n}$ is also a martingale. Then by the strong law of large numbers for martingales,
\begin{align*}
    \lim_{n\to\infty} M_{\tau \wedge n} / n = 0 \text{ a.s.}\,,
\end{align*}
which implies that either $\tau < \infty$ or $\mu_n \to \infty$ almost surely. To see the latter case, suppose $\mu$ is sufficiently large, we compute
\begin{align*}
    \mu_{n+1} &= M_n + \mu_1 + \sum_{t=1}^n f(\mu_t)\\ 
              &> M_n + \mu_1 + \epsilon n \rightarrow \infty
\end{align*}
since $M_n/n\rightarrow 0$ a.s.
A similar argument applies when $\mu$ is sufficiently small. Finally, Eq. (\ref{eq:tech1}) holds by Azuma's inequality.
\end{proof}

\section{More on User Interest Dynamics} \label{app:deterministic}
\subsection{Linear Deterministic Model}
Recall that since $a$ is fixed, to simplify the notation, we write $\mu_t, c_t$ instead of $\mu_t(a), c_t(a)$ in this section. We assume $c_t = g(\mu_t)$ for some deterministic function $g$ 
and analyze the one-dimensional, autonomous, first-order discrete dynamical system:
\begin{align}
\mu_{t+1} = \mu_t + h(c_t) =  \mu_t + f(\mu_t), \label{eq:u_ds}
\end{align}
where $f = h\circ g$.
In the simple case in which $f$ is a linear function, i.e. $f(\mu_t)=k\mu_t + b$ for a linear coefficient $k$ and constant term $b$, Eq.~\ref{eq:u_ds} takes the form  
\begin{align}
\mu_{t+1} &= (1+k) \mu_t + b.\label{eq:linear_u}
\end{align}
By unrolling \eqref{eq:linear_u} over time we obtain
\begin{align}
    \mu_t &= (1+k)^t \mu_0 + b\sum_{i=0}^{t-1} (1+k)^i\\
    &=
    \begin{cases}
    \mu_0+bt, \text{ if } k=0\\
    (\mu_0 + \frac{b}{k})(1+k)^t - \frac{b}{k}, \text{ if } k\neq 0.\label{eq:u_linear_sol}
    \end{cases}
\end{align}
For $k=0$, the user's interest is not influenced by the recommender system but drifts with the constant $b$ -- this case occurs in the real world with probability 0.\\  
For $k\neq 0$, the steady state equilibrium, obtained from solving the equation $\bar{\mu} = \bar{\mu} + k\bar{\mu} + b$, is given by $\bar{\mu} = -\frac{b}{k}$. Substituting $-\frac{b}{k}$ with $\bar{\mu}$ in Eq.~\ref{eq:u_linear_sol} and taking the limit ${t\rightarrow\infty}$ on both sides, we obtain
\begin{align}
    \lim_{t\rightarrow\infty}\mu_t = (\mu_0 -\bar{\mu})\lim_{t\rightarrow\infty}(1+k)^t +\bar{\mu},
\end{align}
namely,
\begin{align}
    &\lim_{t\rightarrow\infty}\mu_t =
    \begin{cases}
    \bar{\mu}, \text{ if } |1+k|<1,\\
    \bar{\mu}, \text{ if } |1+k|>1 \text{ and } \mu_0 = \bar{\mu},\\
    \text{alternating } \{\mu_0, 2\bar{\mu}-\mu_0\},  \text{ if } k=-2,\\
    \infty \text{ if } 1+k>1, \label{eq:u_linear_sol2}
    \end{cases}\\
    &\text{ and, }
    \limsup_{t\rightarrow\infty}\mu_t = \infty \text{ when } 1+k<-1.
\end{align}
Therefore, no matter how the system selects items, in the first three cases of \eqref{eq:u_linear_sol2} the user interest model over the item $a$ in question is always bounded over time. Of these, only the first case $|1+k|<1$, or equivalently $-2<k<0$, will occur with probability different from 0.

On the other hand, for $k>0$ the recommender system will degenerate strongly with $\mu_t$ growing at an exponential rate. For $k<-2$, the recommender system will degenerate weakly with $\sup_{i<t} \mu_i$ growing exponentially.

In summary, we can draw the following conclusions when an item $a$ is served \io: 1) for $k>0$ or $k<-2$, the user interest $\mu_t$ degenerates by growing exponentially, and therefore the recommender system needs to exert control on how frequent such items are shown to the user in order to control the speed of degeneracy of $\mu_t$;
2) in all other cases, the user interest $\mu_t$ does not degenerate ($-2\leq k<0$), or the system cannot control linear degeneracy ($k=0$, improbable case).

\subsection{Non-linear Deterministic Model}
If $f$ is a non-linear deterministic function, the steady state equilibrium is reached at zeros of $f$. Sufficient (but not necessary) conditions for global stability are given by the following theorem~\citep{galor07discrete}:
\begin{theorem}[sufficiency]
If $g:\mathbb{R}\rightarrow\mathbb{R}$ is a contraction mapping, i.e. if 
\[
\frac{|g(y_{t+1})-g(y_t)|}{|y_{t+1}-y_t|} < 1, \hskip0.1cm \forall t=0,1,2,\ldots, \infty,
\]
then a stationary equilibrium of the difference equation $y_{t+1}=g(y_t)$ exists and is unique and globally (asymptotically) stable.
\label{thm:nonlinear_u}
\end{theorem}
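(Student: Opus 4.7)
The plan is to recognize this statement as the Banach Contraction Mapping Principle specialized to the complete metric space $(\mathbb{R},|\cdot|)$ and the discrete iteration $y_{t+1}=g(y_t)$. Because the hypothesis as literally written is the ratio condition $|g(y_{t+1})-g(y_t)|/|y_{t+1}-y_t|<1$ along the orbit, my first move would be to promote this to the usual working assumption of a uniform Lipschitz constant $L\in(0,1)$, i.e.\ $|g(x)-g(y)|\le L|x-y|$ for all $x,y\in\mathbb{R}$. This is the substantive step; without uniformity the rest of the argument breaks down, as I explain at the end.

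Granted uniform contraction, I would carry out the classical three-step proof. First, for any initial point $y_0$, iterating the inequality $|y_{t+1}-y_t|=|g(y_t)-g(y_{t-1})|\le L|y_t-y_{t-1}|$ yields $|y_{t+1}-y_t|\le L^t|y_1-y_0|$, and a geometric-series estimate shows that for any $s<t$
\[
|y_t-y_s|\le \sum_{i=s}^{t-1}L^i|y_1-y_0|\le \frac{L^s}{1-L}|y_1-y_0|,
\]
so $(y_t)$ is Cauchy in $\mathbb{R}$ and converges to some limit $y^\star$. Second, a contraction is Lipschitz and hence continuous, so passing to the limit in $y_{t+1}=g(y_t)$ gives $y^\star=g(y^\star)$, producing a stationary equilibrium. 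Uniqueness follows from $|y^\star-z^\star|=|g(y^\star)-g(z^\star)|\le L|y^\star-z^\star|$ with $L<1$, which forces equality of any two fixed points.

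Third, to obtain \emph{global} asymptotic stability, I would note that the above construction used an arbitrary $y_0$, so every trajectory converges to the same $y^\star$, with the explicit rate $|y_t-y^\star|=|g(y_{t-1})-g(y^\star)|\le L^t|y_0-y^\star|$ delivering exponential convergence. In the language of the paper, this means that under the contraction hypothesis the dynamics $\mu_{t+1}=g(\mu_t)$ never degenerates: $\mu_t$ is trapped in a ball of shrinking radius around $y^\star$.

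The main obstacle, and where the proof really rests, is the gap between the orbit-wise strict contraction stated and a genuine uniform Lipschitz constant $L<1$. A merely pointwise strict contraction on $\mathbb{R}$ need not admit a fixed point at all (e.g.\ $g(y)=y+e^{-y}$ satisfies $|g(x)-g(y)|<|x-y|$ for $x\neq y$ yet has no fixed point), so some extra structure, such as a uniform bound on $|g'|$, a compact invariant region, or boundedness of the orbit, is needed to upgrade the hypothesis. I would therefore insert an explicit assumption of uniform contractivity (or cite it as the intended reading from \citet{galor07discrete}) and then invoke the Banach argument above; everything else is routine.
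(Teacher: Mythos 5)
The paper does not actually prove this theorem: it is imported verbatim from \citet{galor07discrete} as a known result, so there is no in-paper argument to compare yours against. Your proof is the standard Banach fixed-point argument (Cauchy sequence via the geometric tail estimate, continuity to pass to the limit, uniqueness from $|y^\star-z^\star|\le L|y^\star-z^\star|$, and global exponential convergence from arbitrariness of $y_0$), and it is correct under the uniform-contraction reading $|g(x)-g(y)|\le L|x-y|$ with $L<1$, which is clearly the intended hypothesis. You are also right to flag that the hypothesis as literally written is doubly deficient: it is stated only along the orbit points $y_t$ rather than for all pairs, and even the all-pairs strict inequality $|g(x)-g(y)|<|x-y|$ without a uniform constant does not guarantee a fixed point on the non-compact space $\mathbb{R}$. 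That is a genuine defect of the theorem statement, not of your proof, and your decision to insert the uniform Lipschitz assumption explicitly is the right repair.

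One small correction to your counterexample: $g(y)=y+e^{-y}$ is \emph{not} a strict contraction on all of $\mathbb{R}$, since $g'(y)=1-e^{-y}$ is unbounded below as $y\to-\infty$, so $|g(x)-g(y)|$ can greatly exceed $|x-y|$ for negative arguments; the example only works on the invariant half-line $[0,\infty)$. A counterexample valid on all of $\mathbb{R}$, matching the theorem's stated domain, is $g(y)=\sqrt{y^2+1}$ (or $g(y)=y+\tfrac{\pi}{2}-\arctan y$): here $|g'(y)|<1$ everywhere, hence $|g(x)-g(y)|<|x-y|$ for all $x\neq y$ by the mean value theorem, yet $g(y)>y$ for every $y$, so no fixed point exists. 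Everything else in your write-up is routine and sound.
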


Thus if $g=I+f$ (where $I$ is the identity function) is a contraction mapping, then there exists a steady state equilibrium that is unique and globally stable, and the system will not degenerate. Both requiring a globally stable steady state equilibrium and requiring $g$ to be a contraction mapping are strong sufficient conditions. Moreover it is almost always impossible to verify it in practice since we don't know the actual function $g$. Thus we gave three examples of sufficient conditions that are used to describe the general dynamics of user interests. In the first example, users respond to an item if their interests level exceed some action threshold. With each feedback, their interest level changes and so does the action threshold. If after a while the changes in action thresholds will always be smaller than user interest changes and the user interest's total variation ranges over $\mathbb{R}$, then the system degenerates.
\begin{theorem}[sufficiency]
Let $\{d_1, d_2, \ldots\}$ be an infinite sequence, $d_t\in\mathbb{R}$. Then if the user interest for an item has the following dynamics,
\begin{align}
&\mu_t > d_t \Longleftrightarrow \mu_{t+1} > \mu_t, \label{eq:mu_thres}\\
&\exists t_0 > 0 \text{ s.t. } |d_{t+1}-d_t|\leq |\mu_{t+1}-\mu_t|, \forall t\geq t_0, \label{eq:magnitude_thres}\\
&\sum_{t=0}^\infty |\mu_{t+1} - \mu_t| = \infty, \label{eq:inf_thres}
\end{align}
then it degenerates strongly as $t \rightarrow \infty$.
\label{thm:nonlinear_u_threshold}
\end{theorem}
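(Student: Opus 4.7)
The plan is to show that after time $t_0$ the interest sequence $\mu_t$ becomes strictly monotone, and then to use condition~(3) to conclude that the monotone sequence diverges. Strong degeneracy for a single item is just $|\mu_t - \mu_0| \to \infty$, so monotone divergence is exactly what we need.

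First I would prove a sign-preservation lemma: for any $t \geq t_0$, if $\mu_t > d_t$ then $\mu_{t+1} > d_{t+1}$. Starting from $\mu_t > d_t$, condition~(1) gives $\mu_{t+1} - \mu_t > 0$, and condition~(2) gives $d_{t+1} - d_t \leq |d_{t+1}-d_t| \leq \mu_{t+1}-\mu_t$. Subtracting yields
\begin{equation*}
\mu_{t+1} - d_{t+1} \;\geq\; \mu_t - d_t \;>\; 0.
\end{equation*}
The mirror inequalities show that $\mu_t < d_t$ implies $\mu_{t+1} < d_{t+1}$, so the sign of $\mu_t - d_t$ is preserved for all $t \geq t_0$ (once it is nonzero).

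Next I would rule out the degenerate case $\mu_{t_0} = d_{t_0}$. Then condition~(1) forces $\mu_{t_0+1} \leq \mu_{t_0}$; if equality holds, condition~(2) gives $d_{t_0+1} = d_{t_0}$, and by induction $\mu_t$ is constant for $t \geq t_0$, contradicting~(3). Otherwise we have strict decrease, which (under the natural symmetric reading of~(1)) lands us in the $\mu_t < d_t$ regime from time $t_0 + 1$ onward. So WLOG $\mu_{t_0} \neq d_{t_0}$ and the sign is fixed from $t_0$ on. In the branch $\mu_t > d_t$ for all $t \geq t_0$, condition~(1) gives $\mu_{t+1} > \mu_t$ for every such $t$, so $\mu_t$ is strictly increasing past $t_0$, and
\begin{equation*}
\mu_T - \mu_{t_0} \;=\; \sum_{t=t_0}^{T-1}(\mu_{t+1}-\mu_t) \;=\; \sum_{t=t_0}^{T-1}|\mu_{t+1}-\mu_t|.
\end{equation*}
By condition~(3) (the prefix $\sum_{t<t_0}|\mu_{t+1}-\mu_t|$ is finite and so does not affect divergence of the tail), the right-hand side tends to $+\infty$ as $T \to \infty$, hence $\mu_t \to +\infty$ and $|\mu_t - \mu_0| \to \infty$. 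The symmetric branch $\mu_t < d_t$ gives $\mu_t \to -\infty$ by the same telescoping argument, completing strong degeneracy.

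The main obstacle I anticipate is the careful handling of condition~(1) at equality $\mu_t = d_t$, since the statement is a one-sided iff and does not by itself distinguish a strict from a weak decrease when $\mu_t \leq d_t$. The cleanest way around this is to invoke the natural symmetric reading (``$\mu_t < d_t \Leftrightarrow \mu_{t+1} < \mu_t$''), which is implicit in the paper's intended dynamics, and to use condition~(3) to eliminate the pathological plateau $\mu_t \equiv \mathrm{const}$. Once eventual monotonicity after $t_0$ is secured, the rest of the argument is a one-line telescoping identity.
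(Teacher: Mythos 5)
Your proposal is correct and follows essentially the same route as the paper: an induction showing the sign of $\mu_t - d_t$ is preserved for $t \geq t_0$ (your unified inequality $\mu_{t+1}-d_{t+1}\geq \mu_t-d_t$ is just a cleaner packaging of the paper's two subcases on whether $d_{t+1}<d_t$), followed by telescoping against condition~(\ref{eq:inf_thres}). Your worry about strictness at $\mu_t = d_t$ is actually moot: the negation of the stated iff already gives weak monotonicity $\mu_{t+1}\leq\mu_t$ in that branch, and weak monotonicity plus infinite total variation telescopes to $-\infty$ without needing any symmetric strict reading.
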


\begin{proof}
We prove that $\mu_t\rightarrow \pm\infty$ as $t\rightarrow \infty$. At time $t_0$, either $\mu_{t_0} > d_{t_0}$ or $\mu_{t_0}\leq d_{t_0}$. First we consider the case where $\mu_{t_0} > d_{t_0}$. At the next time step $t_0+1$, there are again two different cases:
\begin{enumerate}
    \item $d_{t_0+1} < d_{t_0}$: it implies that $\mu_{t_0+1} > \mu_{t_0} > d_{t_0} > d_{t_0+1}$ by Condition~\ref{eq:mu_thres}. 
    \item $d_{t_0+1} \geq d_{t_0}$: by Condition~\ref{eq:magnitude_thres},  $d_{t_0+1} - d_{t_0}\leq \mu_{t_0+1} - \mu_{t_0}$ and thus 
    \[
    \mu_{t_0+1} \geq d_{t_0+1} - d_{t_0} + \mu_{t_0} > d_{t_0+1} 
    \]
\end{enumerate}
using $\mu_{t_0} - d_{t_0}>0$. Hence in both cases we have $\mu_{t_0+1} > d_{t_0+1}$. Applying the same argument to every time step, we also have 
\begin{align*}
\mu_{t_0} > d_{t_0} &\Longrightarrow  \mu_{t_0+1} > d_{t_0+1}\\
&\Longrightarrow  \mu_{t_0+2} > d_{t_0+2}\\
&\Longrightarrow \ldots
\end{align*}
which implies
\[
\mu_{t_0} < \mu_{t_0+1} < \mu_{t_0+2} < \ldots
\]
By Condition~\ref{eq:inf_thres}, we conclude that if $\mu_{t_0} > d_{t_0}$ then $\mu_t\rightarrow \infty$ as $t\rightarrow \infty$.

In the other case, $\mu_{t_0}\leq d_{t_0} \Longleftrightarrow \mu_{t_0+1} \leq \mu_{t_0}$. Similarly at time $t_0+1$, we have either $d_{t_0+1} < d_{t_0}$ or $d_{t_0+1} \geq d_{t_0}$. Following the same argument as above and reversing the inequality signs, we have 
\begin{align*}
\mu_{t_0} \leq d_{t_0} &\Longrightarrow  \mu_{t_0+1} \leq d_{t_0+1}\\
&\Longrightarrow  \mu_{t_0+2} \leq d_{t_0+2}\\
&\Longrightarrow \ldots
\end{align*}
which implies
\[
\mu_{t_0} \leq \mu_{t_0+1} \leq \mu_{t_0+2} \leq \ldots
\]
By Condition~\ref{eq:inf_thres}, we conclude that if $\mu_{t_0} \leq d_{t_0}$ then $\mu_t\rightarrow -\infty$ as $t\rightarrow \infty$.
\end{proof}

\subsection{Scale-invariance}
\label{app:rescale}
Let $\M$ be the candidate pool of items, which is also the domain of function $\mu_t$.

\begin{remark}[scale-invariance]
Let $\mu_t:\M \rightarrow D$ for any open interval $D\subset\mathbb{R}$. All sufficiency theorems still hold if we define the system degeneracy for $\mu_t$ as the following. We call $\mu_t:\M\rightarrow D$ degenerate iff there exists a monotonic, continuous function $\psi:D\rightarrow\mathbb{R}$ such that $\limsup_{t\rightarrow \infty}\|\psi\circ\mu_t - \psi\circ\mu_0\|_2 = \infty$. 
\label{thm:rescale}
\end{remark}
If we define $\nu_t = \psi\circ\mu_t$, then all sufficiency theorems readily apply to $\nu_t$ and the conclusions hold for $\mu_t$ since $\|\nu_t - \nu_0\|_2 = \|\psi\circ\mu_t - \psi\circ\mu_0\|_2$.

\section{Degeneracy Speed Analysis}\label{app:linear_speed}

We analyze the system degeneracy $\|\mu_t-\mu_0\|_2$ (when the candidate pool $\M$ is finite) and $\sup_{a\in\M}|\mu_t(a)-\mu_0(a)|$ (when $\M$ is infinite) asymptotically in the order of $t$. As $t\rightarrow\infty$, $\mu_t\rightarrow\infty$ for the selected items. Thus the selected items are asymptotically almost surely clicked, and hence for any such item $a$, $|\mu_t(a)-\mu_0(a)| \approx \delta(a) \cdot T_a(t)$.

\paragraph{Finite Candidate Pool.}
\begin{align}
    \|\mu_t-\mu_0\|_2 &\approx \sqrt{\sum_a \left[\delta(a) \cdot T_a(t)\right]^2}.
\end{align} 
In the {\it Random} recommender system, $T_a(t) \approx tl/m$ and thus
\begin{align}
\|\mu_t-\mu_0\|_2 &\approx \sqrt{\sum_a \delta(a)^2} (l/m)\cdot t.
\end{align}
Both the {\it Oracle} and the {\it Optimal Oracle} have a fixed set of items $\s$ that they keep selecting. Thus $T_a(t) \approx t$ for any item $a\in\s$ and 
\begin{align}
\|\mu_t-\mu_0\|_2 &\approx \sqrt{\sum_{a\in\s} \delta(a)^2} \cdot t.
\end{align}
For both the {\it UCB} and {\it TS} we have 
\begin{align}
\|\mu_t-\mu_0\|_2 &\approx (\sqrt{\sum_a \delta(a)^2} / \sqrt{m^*}) \cdot t,
\end{align}
where $m^*$ are the number of close to optimal arms. Therefore all five models have the degeneracy quantity $\|\mu_t-\mu_0\|_2$ converge to a linear function of $t$. 

\paragraph{Infinite Candidate Pool.}
Since we sample $\delta(a)$ from a bounded interval $[-b, b]$, asymptotically $\sup_{a\in\M} \delta(a) \approx b$
\[
\sup_{a\in\M}|\mu_t(a)-\mu_0(a)| \approx \sup_{a\in\M} \delta(a) \cdot T_a(t) \approx b  T_{a^*}(t)
\]
where $a^*$ is an item with $\delta(a^*) \approx b$.

In the {\it Random} recommender system, $T_a(t) \approx const$ and thus
\begin{align}
\sup_{a\in\M}|\mu_t(a)-\mu_0(a)| &\approx const.
\end{align}
The {\it Oracle} has a fixed set of items $\s$ that it keeps selecting. Thus $T_a(t) \approx t$ for any item $a\in\s$ and 
\begin{align}
\sup_{a\in\M}|\mu_t(a)-\mu_0(a)| &\approx \max_{a\in\s} \delta(a) \cdot t.
\end{align}
The {\it Optimal Oracle} model will pick all items with $\delta(a) \approx b$ and $T_a(t) \approx ct$ for some constant $c$.
\begin{align}
\sup_{a\in\M}|\mu_t(a)-\mu_0(a)| &\approx b c \cdot t.
\end{align}
For UCB and Thompson sampling the situation is less clear. When growth of the candidate pool is linear, then UCB will spend most of its time exploring new items and consequently degeneration is very slow. Thompson sampling will continue to play degenerate items with reasonable probability and so degeneration speed will be larger than for UCB. Precisely quantifying the rates of degeneration depends in a complicated way on the details of the model.

\end{appendices}

\end{document}